\documentclass[journal]{IEEEtran}
\usepackage[T1]{fontenc}
\usepackage[utf8]{inputenc}
\usepackage{array}
\usepackage{float}
\usepackage{multirow}
\usepackage{amsmath}
\usepackage{amsthm}
\usepackage{amssymb}
\usepackage{graphicx}
\usepackage[bookmarks=true,bookmarksnumbered=true,bookmarksopen=true,bookmarksopenlevel=1,
 breaklinks=false,pdfborder={0 0 0},pdfborderstyle={},backref=false,colorlinks=false]
 {hyperref}
\hypersetup{pdftitle={Your Title},
 pdfauthor={Your Name},
 pdfpagelayout=OneColumn, pdfnewwindow=true, pdfstartview=XYZ, plainpages=false}

\makeatletter

\providecommand{\tabularnewline}{\\}
\floatstyle{ruled}
\newfloat{algorithm}{tbp}{loa}
\providecommand{\algorithmname}{Algorithm}
\floatname{algorithm}{\protect\algorithmname}

\usepackage[caption=false,font=footnotesize]{subfig}
\usepackage{algorithm}
\usepackage{algpseudocode}

\makeatother

\theoremstyle{plain}
\newtheorem{thm}{\protect\theoremname}
\newtheorem{cor}[thm]{\protect\corollaryname}
\newtheorem{lem}[thm]{\protect\lemmaname}
\providecommand{\corollaryname}{Corollary}
\providecommand{\lemmaname}{Lemma}
\providecommand{\theoremname}{Theorem}

\begin{document}

\title{Robust Decentralized Federated Distillation via Multi-Modality Knowledge
Collaboration}
\author{{\normalsize Xiao Ma$^{a}$, Hong Shen$^{b}$, Hui Tian$^{c}$, Wei
Ke$^{a}$, Wenqi Lyu$^{a}$}
\medskip

{\normalsize$^{a}$Faculty of Applied Sciences, Macao Polytechnic
University, Macao SAR, China}\\
{\normalsize$^{b}$School of Engineering and Technology, Central Queensland
University, Australia}\\
{\normalsize$^{c}$School of Information and Communication Technology,
Griffith University, Australia}}

\maketitle

\begin{abstract} 
Decentralized federated learning enables clients to collaborate without
centralized coordination, but most existing methods assume that all
clients use the same model architecture. This requirement limits their
use in practical systems where clients may adopt different models
because of heterogeneous computation and resource constraints. The
problem becomes more challenging under Byzantine attacks, since parameter-based
robust aggregation is ineffective for heterogeneous models, especially
for defense against receiver-specific attacks where malicious clients
may send different manipulated predictions to different honest clients.
To address this issue, we propose a robust decentralized federated
distillation method that enables clients with heterogeneous models
to collaborate through predictions on shared unlabeled public data.
In the proposed method, each client first evaluates the received predictions
in three modalities of class prediction, boundary decision, and prediction
correlation. It then filters unreliable clients, assigns reliability-based
weights to the retained clients, and constructs a teacher for each
type of knowledge. Finally, the corresponding distillation gradients
are validated using a supervised gradient computed from private data.
Conflicting prediction and boundary gradients are removed, and conflicting
relation gradients are suppressed before the final model update. We
prove the convergence of the proposed method by showing stable local
optimization for honest clients under Byzantine distillation. Particularly,
we show that our method ensures a bounded Byzantine influence on both
distillation gradients and individual client private gradients after
cross-modality fusion, thereby enabling stable local optimization
for honest clients under Byzantine distillation. Extensive experiments
on CIFAR-10 and CIFAR-100 demonstrate that the proposed method improves
the prediction accuracy of heterogeneous models of clients under non-IID
data and Byzantine attacks. As the booming demands of federated learning
in decentralized environments such as edge computing and mission-oriented
UAV collaborations, our method has a great potential for adoption
of DFL in unreliable real-world scenarios where clients are exposed
to receiver-specific Byzantine messages of malicious predictions.
\end{abstract}

\begin{IEEEkeywords}
Decentralized Federated Learning, Robustness, Knowledge Distillation
\end{IEEEkeywords}

\section{Introduction}

\IEEEPARstart{F}{ederated} learning (FL) enables multiple clients
to train models collaboratively while retaining their raw data locally.
Conventional FL, however, still relies on a central server to distribute
a global model and aggregate client updates\cite{pmlr-v54-mcmahan17a}.
This coordinator can become a communication bottleneck and a single
point of failure, and it may be unavailable or undesirable in collaborations
among autonomous organizations or peer devices. Decentralized federated
learning (DFL) removes this dependency by allowing clients to communicate
directly over a graph. Decentralized stochastic optimization can distribute
the communication load across neighboring nodes\cite{lian2017dpsgd},
while consensus-based FL demonstrates that serverless model optimization
can be implemented through cooperation among devices\cite{savazzi2020consensus}.
These properties make DFL attractive when no trusted server exists
or no participant should control a shared global model.

Most DFL algorithms nevertheless inherit a restrictive assumption
from parameter-aggregation FL: all clients optimize models with the
same architecture and an aligned parameter space\cite{ye2023heterogeneous}.
This assumption is difficult to maintain in practice because clients
may differ in computation, memory, energy, latency, and hardware support.
Consequently, they may deploy models with different depths, widths,
operators, and parameter dimensions. Under such model heterogeneity,
averaging parameters or gradients is undefined when their dimensions
differ. Even when two parameter vectors have the same dimension, coordinate-wise
comparison or Euclidean distance need not represent functional similarity
because distinct architectures encode knowledge differently. Prior
heterogeneous FL studies have therefore identified gradient misalignment
and architectural dependence as central limitations of parameter-space
aggregation\cite{tan2022fedproto,kalra2023proxyfl}. The same limitation
affects many robustness rules that judge a client by the distance
or coordinates of its transmitted model.

Knowledge distillation (KD) provides an alternative communication
space. Instead of transferring internal parameters, KD trains a student
model to match the predictive behavior of a teacher through soft outputs\cite{hinton2015distilling}.
This principle allows models with different architectures to exchange
task knowledge as long as they share a common input and label space.
FedMD applies prediction matching on public data to heterogeneous
federated models\cite{li2019fedmd}. In serverless networks, consensus-based
multi-hop federated distillation exchanges predictions on shared samples
to approximate collaboration in function space\cite{taya2022cmfd},
while SD-Dist uses model outputs on a shared reference dataset to
select useful peers\cite{ye2022sddist}. Compared with transmitting
a parameter vector whose modality depends on the local architecture,
transmitting logits on a shared public reference dataset provides
a compact, architecture-agnostic interface for decentralized collaboration.

Moving communication from parameter space to output space does not
remove adversarial risk; it changes the object that an adversary can
manipulate. A Byzantine client may transmit arbitrary logits, reverse
class preferences, amplify selected entries, or imitate plausible
but misleading predictions. Recent work on federated distillation
has shown that attacks tailored to logits can substantially disrupt
knowledge aggregation and that defenses designed for parameter updates
do not directly transfer to this setting\cite{li2024fedtgd,roux2025byzantinefd}.
The threat is more subtle in a peer-to-peer network. Without a server
that produces one common aggregate, a Byzantine client can send different
messages to different clients during the same training round. Such
equivocation is permitted by standard decentralized Byzantine models
and can drive honest receivers toward different references \cite{he2023clippedgossip}.
Thus, an honest client must assess the predictions available at its
own receiver rather than rely on a globally consistent view of the
network.

Existing research addresses only parts of this combined problem. Byzantine-robust
DFL methods typically screen or transform client model vectors in
a shared parameter space. BRIDGE performs coordinate-wise filtering
before decentralized mixing\cite{fang2022bridge}; UBAR combines parameter
distance with local performance evaluation\cite{guo2022ubar}; and
ClippedGossip and remove-then-clip aggregation bound the effect of
suspicious model differences\cite{he2023clippedgossip,yang2024removethenclip}.
These methods are effective when the exchanged vectors have compatible
semantics, but their coordinate and distance operations cannot be
directly applied to models with different architectures. Conversely,
decentralized distillation methods support heterogeneous models by
exchanging outputs, yet they are primarily designed for benign knowledge
transfer rather than receiver-specific Byzantine manipulation\cite{taya2022cmfd,ye2022sddist,kalra2023proxyfl}.
Robust output aggregation has been studied in server-coordinated collaborative
learning and federated distillation\cite{chang2019cronus,li2024fedtgd,roux2025byzantinefd},
but a central aggregate does not capture the local and potentially
inconsistent message sets observed by different clients in a serverless
network. This leaves an unresolved gap: how to achieve robust decentralized
knowledge collaboration when honest clients may use heterogeneous
models and observe non-IID data, while Byzantine clients can manipulate
the prediction messages delivered to different receivers.

To address this gap, we propose a robust decentralized federated distillation
method for collaborative learning with heterogeneous local models.
Instead of exchanging model parameters across heterogeneous architectures,
clients communicate only their predictions on a shared unlabeled public
dataset. Each honest client independently evaluates the received knowledge
from three complementary modalities: class prediction, boundary decision,
and prediction correlation. For each modality, unreliable clients
are filtered and weighted to construct a teacher. Before distillation,
the corresponding gradients are further validated using a supervised
gradient computed from private data, and conflicting directions are
projected or suppressed. This enables heterogeneous models to collaborate
under reduced influence of Byzantine predictions. Our method has a
great application value for decentralized federated learning in unreliable
real-world scenarios where clients are exposed to receiver-specific
Byzantine messages of malicious predictions.

The main contributions of this work are summarized as follows:
\begin{enumerate}
\item We propose a robust decentralized federated distillation method with
heterogeneous models under Byzantine attacks. By exchanging predictions
on shared public data rather than model parameters, our method enables
clients with different model architectures to collaborate without
a central server while allowing each honest client to independently
defend against malicious predictions.
\item We design a multi-modality knowledge collaboration mechanism with
private-gradient validation. Neighbor selection, reliability weighting,
and teacher construction at each modality reduce the influence of
unreliable clients, while gradient validation prevents conflicting
external knowledge from being directly absorbed into the local model.
\item We prove that our proposed method ensures a bounded Byzantine influence
on both distillation gradients and private gradients of individual
clients after cross-modality fusion, thereby enabling stable local
optimization for honest clients under Byzantine distillation.
\item We conduct experiments under heterogeneous decentralized settings
with different model architectures, non-IID data, and multiple Byzantine
attacks. The results show that the proposed method maintains effective
collaboration among heterogeneous client models under adversarial
prediction manipulation and yields an improved local model prediction
accuracy, thanks to the multi-modality knowledge collaboration and
private-gradient validation.
\end{enumerate}
The rest of the paper is organized as follows: Section \ref{sec:Related-Work}
summarizes related work; Section \ref{sec:Problem-Statement} gives
problem formulation; Section \ref{sec:Methodology} describes the
methodology and algorithm; Section \ref{sec:Theoretical-Analysis}
provides theoretical analysis; Section \ref{sec:Experiments} presents
experimental results and Section \ref{sec:Conclusions} concludes
the paper.

\section{Related Work\label{sec:Related-Work}}

\subsection{Decentralized Federated Learning}

Decentralized federated learning (DFL) removes the centralized parameter
server and lets clients exchange information over a communication
graph to protect privacy. Representative decentralized stochastic
optimization methods, such as D-PSGD, can reduce the communication
bottleneck at the busiest node\cite{lian2017dpsgd} via mixing a client's
local model state with the states of its neighbors. Most gossip- or
consensus-based DFL methods nevertheless exchange model parameters
or gradients. They therefore assume a common architecture and an aligned
parameter space, which makes direct aggregation undefined when clients
use models with different structures and parameter sizes.

Knowledge transfer in the output space offers a natural way to relax
this requirement. FedMD showed that independently designed client
models can collaborate by matching predictions on a public dataset\cite{li2019fedmd}.
In fully decentralized networks, consensus-based multi-hop federated
distillation (CMFD) averages neighbors' predictions on shared samples
to approximate consensus in function space\cite{taya2022cmfd}. SD-Dist
further uses soft predictions on a common reference dataset to identify
similar peers and transfer knowledge among heterogeneous personalized
models\cite{ye2022sddist}. Other approaches preserve model autonomy
through a shared proxy model, as in ProxyFL\cite{kalra2023proxyfl},
or learn personalized collaboration weights from distillation-based
statistical distances, as in KD-PDFL\cite{jeong2023kdpdfl}. These
studies establish that function-space communication can support model
heterogeneity, but methods such as CMFD \cite{taya2022cmfd} and SD-Dist
\cite{ye2022sddist} mainly focus on benign knowledge collaboration
and do not consider Byzantine manipulation of exchanged predictions.

\subsection{Robust Decentralized Federated Learning}

Byzantine-robust DFL has mainly been studied in a shared parameter
space. BRIDGE applies coordinate-wise screening before decentralized
model mixing\cite{fang2022bridge}, while UBAR combines model-distance
filtering with local performance evaluation\cite{guo2022ubar}. ClippedGossip
bounds the influence of neighborhood model differences on arbitrary
communication graphs\cite{he2023clippedgossip}, and remove-then-clip
aggregation couples client filtering with clipping to improve robustness
under heterogeneous data\cite{yang2024removethenclip}. More recent
personalized DFL work allocates aggregation weights using critical
parameter indices to suppress Byzantine clients\cite{zhang2025privacybyzantine}.
Although these methods address malicious participants and, in some
cases, non-IID data, their screening rules compare coordinates, distances,
or subsets of model parameters. Such comparisons require compatible
parameter representations and may confuse architectural differences
with malicious deviations. More recently, DFL-Dual \cite{sun2024dfldual}
combines data-domain and model-domain distances with trust bootstrapping
to identify Byzantine clients, while BALANCE \cite{fang2024balance}
uses each client's local model as a similarity reference to evaluate
received client models. However, these methods still rely on comparing
model updates, model states, or model-domain distances, which are
not directly applicable when clients use heterogeneous architectures.

Robustness has also been investigated directly in the prediction space.
Cronus applies robust statistics to black-box predictions and supports
heterogeneous local models\cite{chang2019cronus}. FedTGD studies
top-$k$ and impersonation attacks against federated distillation
and filters suspicious logits using density clustering and cosine
similarity\cite{li2024fedtgd}. Roux et al. analyze the Byzantine
resilience of distillation-based FL and introduce history-aware client
weighting to strengthen robust prediction aggregation\cite{roux2025byzantinefd}.
Robust Federated Inference further studies adversarial aggregation
of predictions from heterogeneous proprietary models and robustifies
nonlinear DeepSets-based aggregators\cite{dhasade2026robustinference}.
These methods demonstrate the value of output-space defenses, but
they are primarily server-coordinated or inference-oriented and do
not directly address receiver-specific Byzantine messages during iterative
serverless distillation.

\section{Problem Statement\label{sec:Problem-Statement}}

We consider a decentralized federated learning system with $N$ clients,
denoted by $\mathcal{N}=\{1,\ldots,N\}$, communicating over a fully
connected peer-to-peer network. Each client $i$ owns a private labeled
dataset $\mathcal{D}_{i}$ and a local classifier $f_{i}(\cdot;\theta_{i}):\mathcal{X}\rightarrow\mathbb{R}^{C}$.
Clients may use different model architectures and capacities, but
all models share the same $C$-class output space. Therefore, their
model parameters need not have compatible dimensions or semantics.

All clients additionally have access to an unlabeled public dataset
$\mathcal{D}_{\mathrm{pub}}$. Its labels are not used for reliability
estimation, teacher construction, or client optimization. Let $\mathcal{A}\subset\mathcal{N}$
and $\mathcal{H}=\mathcal{N}\setminus\mathcal{A}$ denote the Byzantine
and honest client sets, respectively.

Let $\mathcal{N}_{i}=\mathcal{N}\setminus\{i\}$ denote $i$'s neighbors
and $\mathcal{A}_{i}=\mathcal{A}\cap\mathcal{N}_{i}$ its Byzantine
neighbors. During each round of collaborative distillation, while
an honest client broadcasts its genuine logits (predictions) on the
public data, a Byzantine client may send arbitrary, receiver-specific
logits to its neighbors to corrupt the distillation process.

At training round $t$, each honest client initializes its local model
as $\theta^{t,0}_{i}=\theta^{t}_{i}$ and performs $K$ local SGD
steps on its private data: 
\[
\theta^{t,k+1}_{i}=\theta^{t,k}_{i}-\eta g_{i}\left(\theta^{t,k}_{i};\xi^{t,k}_{i}\right),\quad k=0,\ldots,K-1,
\]
and obtains $\bar{\theta}^{t}_{i}=\theta^{t,K}_{i}$.

After local training steps, client $i$ performs $S$ public distillation
steps, starting from the locally trained model $\bar{\theta}^{t}_{i}$.
Let $\mathcal{B}^{t,s}_{\mathrm{pub}}$ denote the public mini-batch
used at distillation step $s$, where $s=1,\ldots,S$. Logits sent
from client $j$ are genuine and generated by $j$'s current local
model if $j$ is honest, and arbitrary messages if $j$ is Byzantine.

\section{Methodology\label{sec:Methodology}}

\subsection{Framework Overview}

The proposed method achieves Byzantine-robust decentralized FL among
heterogeneous models over $T$ training rounds. In each training round
$t$, each client $i$ first performs supervised training on its private
data to obtain the locally updated model $\bar{\theta}^{t}_{i}$.
It then sequentially processes distillation on $S$ randomly sampled
mini-batches of public data, updating the model from $\widetilde{\theta}^{t,0}_{i}=\bar{\theta}^{t}_{i}$
to $\widetilde{\theta}^{t,S}_{i}$ through the following three steps:
\begin{enumerate}
\item Logit Exchange and Standardization: Each client generates logits by
applying its local model to a randomly sampled mini-batch of shared
data and exchanges it with other clients. Since different model architectures
may produce logits with different scales, each logit vector is centered
and rescaled before comparison.
\item Robust Distillation within Each Modality: Each client compares the
received logits with a median reference across three modalities of
class prediction, boundary decision and prediction correlation. For
each modality, clients with large discrepancies are filtered out,
while the retained clients are assigned reliability weights. Their
knowledge is then aggregated to construct a modality-specific teacher,
from which the corresponding distillation gradient is obtained.
\item Cross-Modality Fusion: The reliability of each modality is first estimated
to determine its adaptive weight. A supervised gradient computed from
private data is then used to validate the three distillation gradients:
conflicting prediction and boundary gradients are removed, and conflicting
relation gradients are suppressed. Finally, the validated gradients
are adaptively weighted and combined to update the local model.
\end{enumerate}
The detailed procedure of the proposed method is presented in Algorithm
\ref{alg:MG-FD} in Section \ref{subsec:Algorithm-Description}.

\begin{algorithm*}[tbh]
\caption{Robust Multi-Modality Decentralized Federated Distillation \label{alg:MG-FD}}

\begin{algorithmic}[1]

\Require Graph $G=(V,E)$, private datasets $\{\mathcal D_i\}_{i\in V}$,
public dataset $\mathcal D_{\mathrm{pub}}$, local steps $K$, training rounds $T$,
distillation steps $S$, and learning rates $\eta,\eta_d$ and distillation strength $\{\lambda_t\}_{t=0}^{T-1}$

\State Initialize local models $\{\theta_i^0\}_{i\in V}$
\For{$t=0,1,\dots,T-1$}
    \Statex \textit{// Each client $i\in V$ performs in parallel}
    \State Perform $K$ local SGD steps on $\mathcal D_i$ to obtain $\bar{\theta}_i^t$, and set $\widetilde{\theta}_i^{t,0}=\bar{\theta}_i^t$
    \For{$s=1,2,\dots,S$}
        \Statex \textit{// Public logit exchange}
        \State Evaluate $\widetilde{\theta}_i^{t,s-1}$ on $\mathcal B_{\mathrm{pub}}^{t,s}$ to obtain $Z_i^{t,s}$
        \State Send $Z_i^{t,s}$ to clients and receive $\{Z_{j\rightarrow i}^{t,s}\}_{j\in\mathcal N_i}$
        \State Transform each $Z_{j\rightarrow i}^{t,s}$ into $\widehat Z_{j\rightarrow i}^{t,s}$ according to Eq.~\eqref{eq:logit-standardization}

        \Statex \textit{// Robust Distillation within Each Modality}
			\State Construct the median reference $M_i^{t,s}$ from $\{\widehat Z_{j\rightarrow i}^{t,s}\}_{j\in\mathcal V_i}$ according to Eq.~\eqref{eq:median-reference}
        \For{$q\in\{\mathrm{pred},\mathrm{bd},\mathrm{rel}\}$}
            \State Compare $\widehat Z_{j\rightarrow i}^{t,s}$ with $M_i^{t,s}$ at modality $q$ according to Eqs.~\eqref{eq:pred-discrepancy}, \eqref{eq:bd-discrepancy}, and \eqref{eq:rel-discrepancy}
            \State Filter clients using $d_q(\widehat Z_{j\rightarrow i}^{t,s},M_i^{t,s})$ and the adaptive threshold $\xi_i^{q,t,s}$ according to Eqs.~\eqref{eq:adaptive-threshold} and \eqref{eq:source-filtering}, obtaining $\mathcal S_i^{q,t,s}$
            \State For each $j\in\mathcal S_i^{q,t,s}$, compute $r_{ij}^{q,t,s}$ and $w_{ij}^{q,t,s}$ according to Eqs.~\eqref{eq:source-reliability} and \eqref{eq:source-weighting}
            \State Aggregate the retained client knowledge using $\{w_{ij}^{q,t,s}\}$ to construct $Q_i^{q,t,s}$
            \State Distill $Q_i^{q,t,s}$ into $\widetilde{\theta}_i^{t,s-1}$ and obtain $g_i^{q,t,s}$ according to Eqs.~\eqref{eq:pred-distillation}, \eqref{eq:bd-distillation}, and \eqref{eq:rel-distillation}
			\State Compute the reliability $s_i^{q,t,s}$ of modality $q$ according to Eq.~\eqref{eq:granularity-score}
        \EndFor

        \Statex \textit{// Cross-Modality Fusion}
		
        \State Compute the adaptive modality weights $\{\omega_i^{q,t,s}\}_{q\in\mathcal Q}$ according to Eq.~\eqref{eq:granularity-weight}
        \State Sample a private labeled mini-batch and compute $g_i^{\mathrm{sup},t,s}$ according to Eq.~\eqref{eq:supervised-gradient}
        \State Validate the class prediction and boundary decision gradients according to  Eq.~\eqref{eq:gradient-projection}, and validate the prediction correlation gradient according to Eq.~\eqref{eq:relation-validation}
        \State Combine the validated gradients according to Eq.~\eqref{eq:combined-gradient}
        \State Update $\widetilde{\theta}_i^{t,s}$ according to Eq.~\eqref{eq:public-update}
    \EndFor
    \State Set $\theta_i^{t+1}=\widetilde{\theta}_i^{t,S}$
\EndFor
\State \textbf{Output:} Local models $\{\theta_i^T\}_{i\in V}$
\end{algorithmic}
\end{algorithm*}

\subsection{Logit Exchange and Standardization \label{subsec:Logit-Exchange-and}}

Let $\mathcal{B}^{t,s}_{\mathrm{pub}}=\{x^{t,s}_{b}\}^{B}_{b=1}$
denote the $s$-th mini-batch of public data used at distillation
step $s$ of training round $t$. On the $s$-th mini-batch, honest
client $j$ evaluates its current model $\widetilde{\theta}^{t,s-1}_{j}$
and obtains 
\[
Z^{t,s}_{j}=f_{j}\left(\mathcal{B}^{t,s}_{\mathrm{pub}};\widetilde{\theta}^{t,s-1}_{j}\right)\in\mathbb{R}^{B\times C}.
\]
For receiver $i$, the logit attributed to client $j$ is denoted
by $Z^{t,s}_{j\rightarrow i}$. An honest client sends its genuine
logits, 
\[
Z^{t,s}_{j\rightarrow i}=Z^{t,s}_{j},
\]
whereas a Byzantine client may send an arbitrary finite logits and
may transmit different messages to different receivers.

Because heterogeneous model architectures may produce logits with
different offsets and scales, each logit is centered by subtracting
its mean and rescaled by its standard deviation across the $C$ classes.
The resulting logits have approximately zero mean and scale at most
one, making outputs from heterogeneous models directly comparable
while preserving their relative class ordering. For $z\in\mathbb{R}^{C}$,
define 
\begin{equation}
\mathcal{S}(z)=\frac{z-\mu(z)\mathbf{1}_{C}}{\sigma(z)+\epsilon},\label{eq:logit-standardization}
\end{equation}
where $\mu(z)=\frac{1}{C}\sum^{C}_{c=1}z_{c}$ and $\sigma(z)=\left[\frac{1}{C}\sum^{C}_{c=1}\left(z_{c}-\mu(z)\right)^{2}\right]^{1/2}$,
and $\epsilon>0$ prevents division by zero. This operation is applied
row-wise to obtain $\widehat{Z}^{t,s}_{j\rightarrow i}=\mathcal{S}(Z^{t,s}_{j\rightarrow i})$.

\subsection{Robust Distillation within Each Modality}

Let $\mathcal{V}_{i}=\mathcal{N}_{i}\cup\{i\}$ denote client $i$
and its neighbors. Client $i$ first constructs a coordinate-wise
median reference 
\begin{equation}
M_{i}=CoordMedian_{j\in\mathcal{V}_{i}}\widehat{Z}_{j\rightarrow i}.\label{eq:median-reference}
\end{equation}
The median is computed independently for each public sample and class
coordinate, and is used only to evaluate the received logits.

Based on this reference $M_{i}$, client $i$ then compares the logits
from each client across three modalities: class prediction, boundary
decision, and prediction correlation.

\subsubsection{Multi-Modality Logit Comparison \label{subsubsec:source-evaluation}}

For the three modalities 
\[
\mathcal{Q}=\{\mathrm{pred},\mathrm{bd},\mathrm{rel}\},
\]
client $i$ measures the difference between each client's standardized
logits $\widehat{Z}_{j\rightarrow i}$ and the median reference $M_{i}$.

For class prediction knowledge, we define 
\begin{equation}
d_{\mathrm{pred}}\left(\widehat{Z}_{j\rightarrow i},M_{i}\right)=\frac{1}{B}\sum^{B}_{b=1}\left(1-\frac{\left\langle \widehat{Z}_{j\rightarrow i,b},M_{i,b}\right\rangle }{\left\Vert \widehat{Z}_{j\rightarrow i,b}\right\Vert _{2}\left\Vert M_{i,b}\right\Vert _{2}+\epsilon}\right).\label{eq:pred-discrepancy}
\end{equation}
This discrepancy measures the difference in class preferences between
client $j$ and the median reference.

For boundary decision knowledge on logit matrix $A$, letting $A_{b,(1)}$
and $A_{b,(2)}$ denote the largest and second-largest entries of
row $A_{b}$, respectively, we define 
\[
m_{b}(A)=A_{b,(1)}-A_{b,(2)},\qquad P_{b}(A)=Softmax(A_{b}/T_{\mathrm{kd}}),
\]
where $T_{\mathrm{kd}}>0$ denotes the distillation temperature used
to smooth the class probabilities, with a larger $T_{\mathrm{kd}}$
producing a smoother distribution over classes, and 
\[
\Phi_{\mathrm{bd}}(A)_{b}=\left[m_{b}(A);P_{b}(A)\right].
\]
The boundary discrepancy is then 
\begin{equation}
d_{\mathrm{bd}}\left(\widehat{Z}_{j\rightarrow i},M_{i}\right)=\frac{\left\Vert \Phi_{\mathrm{bd}}\left(\widehat{Z}_{j\rightarrow i}\right)-\Phi_{\mathrm{bd}}(M_{i})\right\Vert _{1}}{B(C+1)}.\label{eq:bd-discrepancy}
\end{equation}
This discrepancy measures the difference in boundary decision and
class probabilities.

For prediction correlation knowledge on logit matrix $A$, we define
\[
H_{b}(A)=\frac{A_{b}}{\|A_{b}\|_{2}+\epsilon},\qquad R(A)=H(A)H(A)^{\top}.
\]
The relation discrepancy is defined by 
\begin{equation}
d_{\mathrm{rel}}\left(\widehat{Z}_{j\rightarrow i},M_{i}\right)=\frac{\left[\sum_{b\neq b'}\left(R(\widehat{Z}_{j\rightarrow i})_{b,b'}-R(M_{i})_{b,b'}\right)^{2}\right]^{1/2}}{\max\left\{ \left[\sum_{b\neq b'}R(M_{i})^{2}_{b,b'}\right]^{1/2},\epsilon\right\} }.\label{eq:rel-discrepancy}
\end{equation}
This discrepancy measures the difference in pairwise relations among
different public samples.

Therefore, each client is evaluated against the same median reference
from class prediction, boundary decision, and prediction correlation
perspectives.

\subsubsection{Robust Neighbor Filtering and Weighting \label{subsubsec:source-filtering}}

For each modality, client $i$ first filters out clients whose logits
deviate too much from the median reference, and then assigns reliability
weights to the remaining clients before aggregation.

Client $i$ first determines an adaptive threshold to identify clients
whose logits deviate excessively from the median reference. Specifically,
\[
c^{q}_{i}=Median_{j\in\mathcal{V}_{i}\setminus\{i\}}d_{q}\left(\widehat{Z}_{j\rightarrow i},M_{i}\right).
\]
The corresponding median absolute deviation is 
\[
MAD^{q}_{i}=Median_{j\in\mathcal{V}_{i}\setminus\{i\}}\left|d_{q}\left(\widehat{Z}_{j\rightarrow i},M_{i}\right)-c^{q}_{i}\right|.
\]
The filtering threshold is 
\begin{equation}
\xi^{q}_{i}=c^{q}_{i}+\kappa\left(MAD^{q}_{i}+\epsilon_{\mathrm{MAD}}\right),\label{eq:adaptive-threshold}
\end{equation}
where $\kappa$ controls the filtering tolerance and $\epsilon_{\mathrm{MAD}}>0$
prevents a degenerate threshold when MAD is zero. Using the median
and MAD makes the threshold less sensitive to extreme Byzantine logits
and allows it to adapt to the current discrepancy distribution at
each modality. Client $i$ retains itself and all clients whose discrepancies
are below the threshold: 
\begin{equation}
\mathcal{S}^{q}_{i}=\{i\}\cup\left\{ j\in\mathcal{N}_{i}:d_{q}\left(\widehat{Z}_{j\rightarrow i},M_{i}\right)\le\xi^{q}_{i}\right\} .\label{eq:source-filtering}
\end{equation}

Filtering removes clearly unreliable clients, but some Byzantine clients
may still remain. Therefore, client $i$ further assigns a reliability
score to each retained client: 
\begin{equation}
r^{q}_{ij}=\alpha_{\mathrm{med}}d_{q}\left(\widehat{Z}_{j\rightarrow i},M_{i}\right)+\beta_{\mathrm{local}}d_{q}\left(\widehat{Z}_{j\rightarrow i},\widehat{Z}_{i}\right),\qquad j\in\mathcal{S}^{q}_{i}.\label{eq:source-reliability}
\end{equation}
Here, the first term measures the difference from the median reference,
and the second measures the difference from client $i$'s own prediction.
The retained clients are then weighted as 
\begin{equation}
w^{q}_{ij}=\frac{\exp(-r^{q}_{ij}/T_{w})}{\sum_{\ell\in\mathcal{S}^{q}_{i}}\exp(-r^{q}_{i\ell}/T_{w})}.\label{eq:source-weighting}
\end{equation}
Thus, more reliable clients receive larger weights in the subsequent
aggregation.

\subsubsection{Modality-Specific Teacher Construction and Distillation \label{subsubsec:granularity-distillation}}

After filtering and weighting, client $i$ constructs one teacher
for each modality and distills the corresponding knowledge into its
local model.

At distillation step $s$ of training round $t$, the current logit
matrix $Z^{t,s}_{i}$ defined in Section \ref{subsec:Logit-Exchange-and}
is used as the student output, i.e., 
\[
Z^{\mathrm{stu},t,s}_{i}=Z^{t,s}_{i}.
\]
Its standardized logits and normalized probabilities are 
\[
\widehat{Z}^{\mathrm{stu},t,s}_{i}=\mathcal{S}\left(Z^{\mathrm{stu},t,s}_{i}\right),
\]
\[
P^{\mathrm{stu},t,s}_{i}=Softmax\left(\widehat{Z}^{\mathrm{stu},t,s}_{i}/T_{\mathrm{kd}}\right).
\]

\paragraph{Class Prediction Knowledge}

The class prediction teacher is obtained by weighted aggregation:
\[
Q^{\mathrm{pred}}_{i}=\mathcal{S}\left(\sum_{j\in\mathcal{S}^{\mathrm{pred}}_{i}}w^{\mathrm{pred}}_{ij}\widehat{Z}_{j\rightarrow i}\right),
\]
\[
P^{\mathrm{pred}}_{i}=Softmax\left(Q^{\mathrm{pred}}_{i}/T_{\mathrm{kd}}\right).
\]
For each public sample $x_{b}$, define its teacher label and weighted
voting confidence as 
\[
\widehat{y}^{\mathrm{pred}}_{i,b}=\arg\max_{c}Q^{\mathrm{pred}}_{i,b,c},
\]
\[
v_{i,b}=\sum_{j\in\mathcal{S}^{\mathrm{pred}}_{i}}w^{\mathrm{pred}}_{ij}\mathbf{1}\left[\arg\max_{c}\widehat{Z}_{j\rightarrow i,b,c}=\widehat{y}^{\mathrm{pred}}_{i,b}\right].
\]
Let $a_{i,b}=v_{i,b}\mathbf{1}[v_{i,b}\ge\tau_{\mathrm{conf}}],$
where $\tau_{\mathrm{conf}}\in[0,1]$ is the confidence threshold
for retaining the teacher label. The class prediction distillation
loss is 
\begin{align}
\mathcal{L}^{\mathrm{pred},t,s}_{i} & =\frac{T^{2}_{\mathrm{kd}}}{B}\sum^{B}_{b=1}D_{\mathrm{KL}}\left(P^{\mathrm{pred}}_{i,b}\|P^{\mathrm{stu},t,s}_{i,b}\right)\label{eq:pred-distillation}\\
 & \quad+\mu\frac{\sum^{B}_{b=1}a_{i,b}CE\left(Z^{\mathrm{stu},t,s}_{i,b},\widehat{y}^{\mathrm{pred}}_{i,b}\right)}{\sum^{B}_{b=1}a_{i,b}+\epsilon}.\nonumber 
\end{align}
The corresponding distillation gradient is 
\[
g^{\mathrm{pred},t,s}_{i}=\nabla_{\theta_{i}}\mathcal{L}^{\mathrm{pred},t,s}_{i}.
\]

\paragraph{Boundary Decision Knowledge.}

Similarly, the boundary decision teacher is 
\[
Q^{\mathrm{bd}}_{i}=\mathcal{S}\left(\sum_{j\in\mathcal{S}^{\mathrm{bd}}_{i}}w^{\mathrm{bd}}_{ij}\widehat{Z}_{j\rightarrow i}\right),
\]
\[
P^{\mathrm{bd}}_{i}=Softmax\left(Q^{\mathrm{bd}}_{i}/T_{\mathrm{kd}}\right).
\]
For sample $x_{b}$, let $\widehat{y}^{\mathrm{bd}}_{i,b}=\arg\max_{c}Q^{\mathrm{bd}}_{i,b,c}.$
The teacher and student margins are 
\[
m^{\mathrm{bd}}_{i,b}=Q^{\mathrm{bd}}_{i,b,\widehat{y}^{\mathrm{bd}}_{i,b}}-\max_{c\neq\widehat{y}^{\mathrm{bd}}_{i,b}}Q^{\mathrm{bd}}_{i,b,c},
\]
\[
m^{\mathrm{stu}}_{i,b}=\widehat{Z}^{\mathrm{stu},t,s}_{i,b,\widehat{y}^{\mathrm{bd}}_{i,b}}-\max_{c\neq\widehat{y}^{\mathrm{bd}}_{i,b}}\widehat{Z}^{\mathrm{stu},t,s}_{i,b,c}.
\]
The boundary decision distillation loss is 
\begin{align}
\mathcal{L}^{\mathrm{bd},t,s}_{i} & =\frac{1}{B}\sum^{B}_{b=1}\left|m^{\mathrm{stu}}_{i,b}-m^{\mathrm{bd}}_{i,b}\right|\label{eq:bd-distillation}\\
 & \quad+\frac{\zeta T^{2}_{\mathrm{kd}}}{B}\sum^{B}_{b=1}D_{\mathrm{KL}}\left(P^{\mathrm{bd}}_{i,b}\|P^{\mathrm{stu},t,s}_{i,b}\right).\nonumber 
\end{align}
The corresponding gradient is 
\[
g^{\mathrm{bd},t,s}_{i}=\nabla_{\theta_{i}}\mathcal{L}^{\mathrm{bd},t,s}_{i}.
\]

\paragraph{Prediction Correlation Knowledge.}

For inter-sample prediction correlation knowledge, client $i$ directly
aggregates the relation matrices of the retained clients and takes
this as its teacher: 
\[
Q^{\mathrm{rel}}_{i}=\sum_{j\in\mathcal{S}^{\mathrm{rel}}_{i}}w^{\mathrm{rel}}_{ij}R\left(\widehat{Z}_{j\rightarrow i}\right).
\]
The student relation matrix is 
\[
R^{\mathrm{stu},t,s}_{i}=R\left(\widehat{Z}^{\mathrm{stu},t,s}_{i}\right).
\]
The prediction correlation distillation loss is 
\begin{equation}
\mathcal{L}^{\mathrm{rel},t,s}_{i}=\frac{1}{B(B-1)}\sum_{b\neq b'}\left(R^{\mathrm{stu},t,s}_{i,b,b'}-Q^{\mathrm{rel}}_{i,b,b'}\right)^{2},\label{eq:rel-distillation}
\end{equation}
and relation distillation gradient is 
\[
g^{\mathrm{rel},t,s}_{i}=\nabla_{\theta_{i}}\mathcal{L}^{\mathrm{rel},t,s}_{i}.
\]
Therefore, the three modalities produce three distillation gradients,
$g^{\mathrm{pred},t,s}_{i}$, $g^{\mathrm{bd},t,s}_{i}$, and $g^{\mathrm{rel},t,s}_{i}$,
which are subsequently validated and combined.

\subsection{Cross-Modality Fusion}

\subsubsection{Adaptive Modality Weighting \label{subsubsec:granularity-weighting}}

The three knowledge modalities may have different reliability at different
clients and distillation steps. We therefore estimate the overall
reliability of modality $q$ from its retained clients as 
\begin{equation}
s^{q}_{i}=\sum_{j\in\mathcal{S}^{q}_{i}}w^{q}_{ij}\exp(-r^{q}_{ij}).\label{eq:granularity-score}
\end{equation}
A larger $s^{q}_{i}$ indicates that the retained clients at modality
$q$ have smaller reliability costs and provide more consistent knowledge.

Let $\boldsymbol{\omega}^{0}=\left(\omega^{0}_{\mathrm{pred}},\omega^{0}_{\mathrm{bd}},\omega^{0}_{\mathrm{rel}}\right)$,
and $\sum_{q\in\mathcal{Q}}\omega^{0}_{q}=1,$ denote the initial
modality weights, and define 
\[
\bar{s}_{i}=\frac{1}{3}\sum_{q\in\mathcal{Q}}s^{q}_{i}.
\]
The adaptive modality weight is 
\begin{equation}
\omega^{q}_{i}=\frac{\exp\left[\log\omega^{0}_{q}+\gamma_{\omega}(s^{q}_{i}-\bar{s}_{i})/T_{\omega}\right]}{{\displaystyle \sum_{\ell\in\mathcal{Q}}\exp\left[\log\omega^{0}_{\ell}+\gamma_{\omega}(s^{\ell}_{i}-\bar{s}_{i})/T_{\omega}\right]}}.\label{eq:granularity-weight}
\end{equation}
Thus, 
\[
\omega^{q}_{i}\ge0,\qquad\sum_{q\in\mathcal{Q}}\omega^{q}_{i}=1.
\]

A modality whose retained clients are more reliable than the current
average receives a larger weight in the final public update. It is
worth distinguishing the two levels of weighting used in the method.The
client weight $w^{q}_{ij}$ controls the contribution of client $j$
within modality $q$, whereas $\omega^{q}_{i}$ controls the contribution
of modality $q$ in the final multi-modality update.

\subsubsection{Private-Gradient Validation and Model Update \label{subsubsec:private-validation}}

At distillation step $s$ of training round $t$, honest client $i$
samples a private labeled mini-batch $\mathcal{B}^{\mathrm{sup},t,s}_{i}\subset\mathcal{D}_{i}$
and computes the supervised gradient at the same parameter point used
for public distillation: 
\begin{equation}
g^{\mathrm{sup},t,s}_{i}=\nabla_{\theta_{i}}\mathcal{L}_{\mathrm{CE}}\left(\mathcal{B}^{\mathrm{sup},t,s}_{i};\widetilde{\theta}^{t,s-1}_{i}\right).\label{eq:supervised-gradient}
\end{equation}
This private gradient is used only to validate the external distillation
directions and does not introduce an additional supervised update
during the public phase.

For $q\in\{\mathrm{pred},\mathrm{bd}\}$, if a distillation gradient
conflicts with the private supervised gradient, its conflicting component
is removed:

{\footnotesize
\begin{equation}
\widetilde{g}^{q,t,s}_{i}=\begin{cases}
g^{q,t,s}_{i}-\dfrac{\left\langle g^{\mathrm{sup},t,s}_{i},g^{q,t,s}_{i}\right\rangle }{\left\Vert g^{\mathrm{sup},t,s}_{i}\right\Vert ^{2}_{2}}g^{\mathrm{sup},t,s}_{i}, & \begin{array}{l}
\left\langle g^{\mathrm{sup},t,s}_{i},g^{q,t,s}_{i}\right\rangle <0,\\
\left\Vert g^{\mathrm{sup},t,s}_{i}\right\Vert _{2}>0,
\end{array}\\[14pt]
g^{q,t,s}_{i}, & \text{otherwise},
\end{cases}.\label{eq:gradient-projection}
\end{equation}
}The projected gradients therefore satisfy 
\[
\left\langle g^{\mathrm{sup},t,s}_{i},\widetilde{g}^{q,t,s}_{i}\right\rangle \ge0,\qquad q\in\{\mathrm{pred},\mathrm{bd}\}.
\]
Prediction correlation knowledge is treated more conservatively. Its
gradient is retained only when it agrees with the private supervised
direction: 
\begin{equation}
\widetilde{g}^{\mathrm{rel},t,s}_{i}=\mathbf{1}\left[\left\langle g^{\mathrm{sup},t,s}_{i},g^{\mathrm{rel},t,s}_{i}\right\rangle >0\right]g^{\mathrm{rel},t,s}_{i}.\label{eq:relation-validation}
\end{equation}

At distillation step $s$ of training round $t$, the validated multi-modality
gradient is 
\begin{equation}
g^{\mathrm{MG},t,s}_{i}=\sum_{q\in\mathcal{Q}}\omega^{q,t,s}_{i}\widetilde{g}^{q,t,s}_{i}.\label{eq:combined-gradient}
\end{equation}
Since all modality weights are nonnegative and every validated modality-wise
gradient has a nonnegative inner product with the private supervised
gradient, 
\[
\left\langle g^{\mathrm{sup},t,s}_{i},g^{\mathrm{MG},t,s}_{i}\right\rangle \ge0.
\]
Client $i$ then performs the public update 
\begin{equation}
\widetilde{\theta}^{t,s}_{i}=\widetilde{\theta}^{t,s-1}_{i}-\eta_{d}\lambda_{t}g^{\mathrm{MG},t,s}_{i},\label{eq:public-update}
\end{equation}
where $\eta_{d}>0$ is the distillation learning rate and $\lambda_{t}\ge0$
controls the overall strength of public knowledge transfer.

After processing all $S$ public mini-batches, 
\[
\theta^{t+1}_{i}=\widetilde{\theta}^{t,S}_{i}.
\]
The resulting procedure first controls unreliable knowledge within
each modality through client filtering and reliability weighting,
and then controls the interaction among the resulting distillation
gradients through private-gradient validation and adaptive modality
fusion.

\subsection{Algorithm Description \label{subsec:Algorithm-Description}}

The complete procedure implementing the proposed method is summarized
in Algorithm \ref{alg:MG-FD}. Specifically, each client first performs
local supervised training and then exchanges logits with its neighbors
on the shared public data. The received logits are transformed to
reduce architecture-dependent offset and scale differences. For each
modality, the client compares the received logits with the median
reference, filters unreliable clients, assigns reliability weights
to the retained clients, and constructs a teacher to obtain the corresponding
distillation gradient. Finally, the three distillation gradients are
adaptively weighted according to their modality reliability, validated
using a private supervised gradient, and combined to update the local
model.

\section{Theoretical Analysis\label{sec:Theoretical-Analysis}}

We analyze the optimization behavior of each honest client separately.
Since heterogeneous clients may use different model architectures
and therefore have incompatible parameter spaces, convergence to a
common model parameter is not well-defined. Instead, we study whether
each honest client's own supervised objective remains stable under
Byzantine public distillation. Specifically, we bound the residual
Byzantine influence on the multi-modality distillation gradient, analyze
the safety of private-gradient validation, and then establish bounded
average stationarity for each honest client.

Within training round $t$, honest client $i$ first performs $K$
local supervised SGD steps from $\theta^{t,0}_{i}=\theta^{t}_{i}$
and obtains $\bar{\theta}^{t}_{i}=\theta^{t,K}_{i}$. The public distillation
phase then starts from $\widetilde{\theta}^{t,0}_{i}=\bar{\theta}^{t}_{i}$
and processes $S$ public mini-batches sequentially. At distillation
step $s$, the class prediction, boundary decision, and prediction
correlation distillation gradients are computed at $\widetilde{\theta}^{t,s-1}_{i}$
and validated using a private supervised gradient, producing the validated
multi-modality gradient $g^{\mathrm{MG},t,s}_{i}$. The public update
is 
\begin{equation}
\widetilde{\theta}^{t,s}_{i}=\widetilde{\theta}^{t,s-1}_{i}-\eta_{d}\lambda_{t}g^{\mathrm{MG},t,s}_{i}.
\end{equation}
After $S$ distillation steps, $\theta^{t+1}_{i}=\widetilde{\theta}^{t,S}_{i}$.
The private supervised gradient is used only for validating the public
distillation directions and does not introduce an additional model
update.

\subsection{Assumptions}

For each honest client $i$, let $F_{i}(\theta)$ denote its local
supervised objective. We make the following assumptions for the convergence
analysis.

\subsubsection*{Assumption 1 (Smooth and lower-bounded local objective)}

For every honest client $i$, $F_{i}$ is $L$-smooth and lower bounded.
Specifically, for any $\theta$ and $\theta'$, 
\begin{equation}
F_{i}(\theta')\leq F_{i}(\theta)+\left\langle \nabla F_{i}(\theta),\theta'-\theta\right\rangle +\frac{L}{2}\left\Vert \theta'-\theta\right\Vert ^{2}_{2},
\end{equation}
and 
\begin{equation}
F_{i}(\theta)\geq F^{\inf}_{i},
\end{equation}
where $F^{\inf}_{i}$ is a finite lower bound of $F_{i}$.

\subsubsection*{Assumption 2 (Unbiased stochastic gradients with bounded second moment)}

For every honest client $i$, let $g_{i}(\theta;\xi)$ denote a stochastic
gradient computed from private data at model parameter $\theta$.
We assume 
\begin{equation}
\mathbb{E}_{\xi}\left[g_{i}(\theta;\xi)\right]=\nabla F_{i}(\theta),
\end{equation}
and that there exists a finite constant $G_{g}>0$ such that 
\begin{equation}
\mathbb{E}_{\xi}\left[\left\Vert g_{i}(\theta;\xi)\right\Vert ^{2}_{2}\right]\leq G^{2}_{g}.
\end{equation}

\subsubsection*{Assumption 3 (Bounded sensitivity of distillation gradients to teacher
representations).}

For each modality $q\in\{\mathrm{pred},\mathrm{bd},\mathrm{rel}\}$,
let $\Gamma^{t,s}_{i,q}(Q)$ denote the distillation gradient of honest
client $i$ induced by the teacher representation $Q$, which is obtained
after client selection, reliability weighting, and teacher construction
on the $s$-th mini-batch of the $t$-th training round. There exists
a finite constant $L_{q}>0$ such that
\begin{equation}
\left\Vert \Gamma^{t,s}_{i,q}(Q)-\Gamma^{t,s}_{i,q}(Q')\right\Vert _{2}\leq L_{q}\left\Vert Q-Q'\right\Vert _{F}.
\end{equation}

This condition states that a bounded change in the teacher representation
induces a bounded change in the corresponding distillation gradient.

\subsubsection*{Assumption 4 (Bounded honest distillation gradients).}

Along the optimization trajectory of every honest client $i$, there
exists a finite constant $G^{H}_{i}>0$ such that 
\begin{equation}
\sum_{q\in\{\mathrm{pred},\mathrm{bd},\mathrm{rel}\}}\omega^{q,t,s}_{i}\left\Vert g^{q,t,s}_{i,H}\right\Vert _{2}\leq G^{H}_{i},\qquad\forall t,s,
\end{equation}
where $g^{q,t,s}_{i,H}$ denotes the distillation gradient induced
by the honest teacher at modality $q$.

This condition bounds the weighted magnitude of the honest distillation
gradients along the optimization trajectory.

\subsection{Main Result}

We first state the main result and then establish the supporting properties
in the subsequent lemmas.

Let $\gamma=\eta K$, and $\alpha=\sup_{t}\eta_{d}\lambda_{t}$, where
$\gamma$ is the local-update step size and $\alpha$ is an upper
bound on the public-distillation step size. Let $\delta_{i}$ denote
the error bound of the private validation gradient and $G_{i}$ denote
the bound on the validated multi-modality distillation gradient, which
will be established below.
\begin{thm}
(Local convergence under Byzantine distillation) \label{thm:main-convergence}
Suppose Assumptions 1-{}-4 hold. Consider any honest client $i$ that
follows Algorithm $\ref{alg:MG-FD}$ for $T$ training rounds, where
each round consists of $K$ local supervised SGD steps followed by
$S$ validated public distillation steps. Let $\gamma=\eta K$ and
$\alpha=\sup_{t}\eta_{d}\lambda_{t}$. If $0<\gamma\leq\frac{1}{4L}$,
then the sequence of local models $\{\theta^{t}_{i}\}^{T}_{t=0}$
generated by Algorithm $\ref{alg:MG-FD}$ satisfies 
\begin{align*}
\frac{1}{T}\sum^{T-1}_{t=0}\mathbb{E}\left[\left\Vert \nabla F_{i}(\theta^{t}_{i})\right\Vert ^{2}_{2}\right] & \leq\frac{2\left(F_{i}(\theta^{0}_{i})-F^{\inf}_{i}\right)}{\gamma T}\\
 & \quad+\frac{5}{8}L^{2}G^{2}_{g}\eta^{2}K^{2}+L\gamma G^{2}_{g}.\\
 & \quad+\frac{2S\alpha\delta_{i}G_{i}}{\gamma}+\frac{LS\alpha^{2}G^{2}_{i}}{\gamma}
\end{align*}
\end{thm}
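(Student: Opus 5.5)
The plan is a per-round descent argument on the honest client's own objective $F_{i}$, with each round split into its supervised phase and its public phase. I then telescope over $t=0,\ldots,T-1$ and divide by $\gamma T/2$. I treat the quantities announced before the theorem as available from the subsequent lemmas:
\begin{itemize}
\item $\mathbb{E}\|g^{\mathrm{sup},t,s}_{i}-\nabla F_{i}(\widetilde{\theta}^{t,s-1}_{i})\|_{2}\le\delta_{i}$, the error bound of the validation gradient;
\item $\|g^{\mathrm{MG},t,s}_{i}\|_{2}\le G_{i}$, the bound on the validated multi-modality gradient.
\end{itemize}
The bound $G_{i}$ comes from Assumption 4 for the honest part. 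The residual Byzantine part is controlled through Assumption 3 applied to the gap between the filtered, weighted teacher and the honest teacher, which is bounded because of standardization and median/MAD filtering. The fact that projection (Eq.~\eqref{eq:gradient-projection}) and the indicator (Eq.~\eqref{eq:relation-validation}) never increase norms also enters here.

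\emph{Supervised phase.} I write $\bar{\theta}^{t}_{i}=\theta^{t}_{i}-\eta\sum_{k<K}g_{i}(\theta^{t,k}_{i};\xi^{t,k}_{i})$ and apply Assumption 1 between $\theta^{t}_{i}$ and $\bar{\theta}^{t}_{i}$. The inner-product term is split as $-\eta\sum_{k}\langle\nabla F_{i}(\theta^{t}_{i}),\nabla F_{i}(\theta^{t,k}_{i})\rangle$, using unbiasedness (Assumption 2) conditionally on the past. I then use
\[
-\langle a,b\rangle\le-\tfrac12\|a\|^{2}+\tfrac12\|a-b\|^{2}
\]
together with $L$-smoothness and the drift bound $\mathbb{E}\|\theta^{t,k}_{i}-\theta^{t}_{i}\|^{2}\le\eta^{2}k^{2}G_{g}^{2}$. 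This gives
\[
\mathbb{E}F_{i}(\bar{\theta}^{t}_{i})\le F_{i}(\theta^{t}_{i})-\tfrac{\gamma}{2}\|\nabla F_{i}(\theta^{t}_{i})\|^{2}+c_{1}\gamma L^{2}G_{g}^{2}\eta^{2}K^{2}+\tfrac{L}{2}\gamma^{2}G_{g}^{2}.
\]
The second-moment term uses $\mathbb{E}\|\eta\sum_{k}g\|^{2}\le\eta^{2}K^{2}G_{g}^{2}$. The condition $\gamma\le\frac{1}{4L}$ is used to absorb the cross terms. I would tune the constant $c_{1}$ (via $\sum_{k}k^{2}\le K^{3}/3$ and the factor from $\gamma\le 1/(4L)$) so that after dividing by $\gamma/2$ it yields $\frac58L^{2}G_{g}^{2}\eta^{2}K^{2}$ and $L\gamma G_{g}^{2}$.

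\emph{Public phase.} For each step $s$, Assumption 1 gives
\[
F_{i}(\widetilde{\theta}^{t,s}_{i})\le F_{i}(\widetilde{\theta}^{t,s-1}_{i})-\eta_{d}\lambda_{t}\langle\nabla F_{i}(\widetilde{\theta}^{t,s-1}_{i}),g^{\mathrm{MG},t,s}_{i}\rangle+\tfrac{L}{2}\eta_{d}^{2}\lambda_{t}^{2}\|g^{\mathrm{MG},t,s}_{i}\|^{2}.
\]
I decompose $\nabla F_{i}=g^{\mathrm{sup},t,s}_{i}+(\nabla F_{i}-g^{\mathrm{sup},t,s}_{i})$. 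The validation property $\langle g^{\mathrm{sup},t,s}_{i},g^{\mathrm{MG},t,s}_{i}\rangle\ge0$, established after Eq.~\eqref{eq:combined-gradient}, lets me drop the first part. Cauchy--Schwarz on the second part gives the increase bound $\eta_{d}\lambda_{t}\delta_{i}G_{i}\le\alpha\delta_{i}G_{i}$. Summing over $s=1,\ldots,S$ shows
\[
\mathbb{E}F_{i}(\theta^{t+1}_{i})\le\mathbb{E}F_{i}(\bar{\theta}^{t}_{i})+S\alpha\delta_{i}G_{i}+\tfrac{L}{2}S\alpha^{2}G_{i}^{2}.
\]

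\emph{Combining.} I chain the two phases, sum over $t$, telescope to $F_{i}(\theta^{0}_{i})-F^{\inf}_{i}$, and multiply by $2/(\gamma T)$. This produces exactly the five terms of the statement, including $\frac{2S\alpha\delta_{i}G_{i}}{\gamma}$ and $\frac{LS\alpha^{2}G_{i}^{2}}{\gamma}$.

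\emph{Main obstacle.} The delicate step is the public phase. Validation is done against a stochastic mini-batch gradient that is not independent of $g^{\mathrm{MG},t,s}_{i}$, since the latter was filtered using it. So one cannot take conditional expectations inside the inner product. Instead I must bound $\mathbb{E}[\|\nabla F_{i}-g^{\mathrm{sup}}\|\,\|g^{\mathrm{MG}}\|]$ pathwise by $G_{i}\,\mathbb{E}\|\nabla F_{i}-g^{\mathrm{sup}}\|$. That is where the almost-sure bound $G_{i}$, rather than a bound in expectation, is needed. I would first try to obtain it deterministically from Assumption 4 plus the boundedness of standardized teachers. If that fails, I would fall back to stating $G_{i}$ as an almost-sure trajectory bound.
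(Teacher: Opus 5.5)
Your proposal is correct and follows the paper's structure: supervised descent, a bounded perturbation from the $S$ public steps, telescoping, and division by $\gamma T/2$.

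\textbf{Public phase.} This matches Lemma~\ref{lem:safe-private-validation}. You drop the validated inner product, apply Cauchy--Schwarz pathwise because $g^{\mathrm{MG},t,s}_{i}$ depends on the private batch, and only then take expectations. The paper defines $\delta_{i}$ through a conditional second moment and applies Jensen, which implies your first-moment version.

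\textbf{The bound $G_{i}$.} Your fallback is not needed. Lemma~\ref{lem:byzantine-distillation-bound} obtains $G_{i}$ from Assumption 4 together with standardization ($\|\widehat{z}\|_{2}\le\sqrt{C}$, relation entries at most one) and $\beta^{q,t,s}_{i}<1$. Median/MAD filtering plays no role in the bound existing; it only shrinks $\beta^{q,t,s}_{i}$.

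\textbf{Supervised phase: the real difference.} The paper writes $\bar{\theta}^{t}_{i}=\theta^{t}_{i}-\gamma u^{t}_{i}$. It imports $\mathbb{E}[u^{t}_{i}\mid\theta^{t}_{i}]=\nabla F_{i}(\theta^{t}_{i})+b^{t}_{i}$ with $\|b^{t}_{i}\|_{2}\le\frac{LG_{g}}{2}\eta K$ from prior work, and handles the cross term by Young's inequality. Because $\mathbb{E}\|u^{t}_{i}\|^{2}_{2}$ then contains $2\|\nabla F_{i}(\theta^{t}_{i})\|^{2}_{2}+2\|b^{t}_{i}\|^{2}_{2}$, it needs $L\gamma\le\frac14$ to absorb the gradient term. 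The bias enters twice, which is where $\frac{5}{16}=\frac14+\frac{1}{16}$ comes from.

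Your per-step inequality, with $\mathbb{E}\|\theta^{t,k}_{i}-\theta^{t}_{i}\|^{2}_{2}\le\eta^{2}k^{2}G^{2}_{g}$ and $\sum_{k<K}k^{2}\le K^{3}/3$, gives $\frac{\gamma}{6}L^{2}G^{2}_{g}\eta^{2}K^{2}$. Bounding $\mathbb{E}\|\eta\sum_{k}g_{k}\|^{2}_{2}\le\gamma^{2}G^{2}_{g}$ directly never creates a $\|\nabla F_{i}\|^{2}_{2}$ term. So, contrary to your remark, nothing needs absorbing and $\gamma\le\frac{1}{4L}$ is never used. After dividing by $\gamma/2$ you get $\frac13$ instead of $\frac58$, so ``tuning'' $c_{1}$ just means weakening $\frac13$ to $\frac58$, which is legitimate.

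Your route is self-contained and slightly sharper. The paper's route buys modularity by reusing a black-box local-SGD lemma.
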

The theorem shows that, for every honest client, the average squared
gradient norm of its local supervised objective remains bounded under
Byzantine public distillation. The first term decreases as $\mathcal{O}(1/T)$,
while the remaining terms characterize the residual effects of local-update
drift, stochastic gradients, private-gradient validation, and public
distillation.
\begin{cor}
Under the conditions of the theorem, choose $\gamma=T^{-1/2}$ and
$\alpha\leq T^{-1}$, equivalently $\eta=1/(K\sqrt{T})$ and $\eta_{d}\lambda_{t}\leq1/T$,
with $T\geq16L^{2}$. Then 
\[
\frac{1}{T}\sum^{T-1}_{t=0}\mathbb{E}\left[\|\nabla F_{i}(\theta^{t}_{i})\|^{2}_{2}\right]=O(T^{-1/2}).
\]
 Thus a uniformly sampled local iterate converges to stationarity
in expectation.
\end{cor}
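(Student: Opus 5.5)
The statement to prove is the Corollary. It follows from Theorem~\ref{thm:main-convergence} by substituting the step-size choices and checking that every term is $O(T^{-1/2})$.

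\textbf{Admissibility.} Set $\gamma=\eta K=T^{-1/2}$. The condition $T\ge 16L^{2}$ gives $\sqrt{T}\ge 4L$, hence $\gamma\le\frac{1}{4L}$. So the theorem applies for every $T$ in range, and its bound holds with $\alpha\le T^{-1}$.

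\textbf{Term-by-term bounds.} I treat $F_i(\theta_i^0)-F_i^{\inf}$, $L$, $G_g$, $S$, $K$, $\delta_i$ and $G_i$ as constants independent of $T$. For $\delta_i$ and $G_i$ this relies on the bounds the paper establishes in the subsequent lemmas, which depend only on Assumptions 3--4 and fixed hyperparameters. The five terms then behave as follows.
\begin{itemize}
\item The optimization term is $2(F_i(\theta_i^0)-F_i^{\inf})/(\gamma T)=2(F_i(\theta_i^0)-F_i^{\inf})\,T^{-1/2}$.
\item The drift term satisfies $\frac58 L^2G_g^2\eta^2K^2=\frac58L^2G_g^2\gamma^2=O(T^{-1})$.
\item The noise term is $L\gamma G_g^2=O(T^{-1/2})$.
\item The validation term satisfies $2S\alpha\delta_iG_i/\gamma\le 2S\delta_iG_i\,T^{-1}\,T^{1/2}=O(T^{-1/2})$.
\item The distillation term satisfies $LS\alpha^2G_i^2/\gamma\le LSG_i^2\,T^{-2}\,T^{1/2}=O(T^{-3/2})$.
\end{itemize}
Summing the five terms gives $O(T^{-1/2})$.

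\textbf{Stationarity of a sampled iterate.} Let $\tau$ be uniform on $\{0,\dots,T-1\}$ and independent of the algorithm. Then $\mathbb{E}\|\nabla F_i(\theta_i^\tau)\|^2$ equals the averaged quantity on the left-hand side, so it is also $O(T^{-1/2})$. This is the claimed convergence to stationarity in expectation.

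\textbf{Main obstacle.} The only delicate point is the $T$-independence of $\delta_i$ and $G_i$. The validation term is what forces $\alpha\le T^{-1}$ rather than $T^{-1/2}$: with $\alpha=T^{-1/2}$ it would be $O(1)$. I would therefore check that the later lemmas bound $\delta_i$ and $G_i$ uniformly over $t$ and $s$. If $\delta_i$ instead grows, for example with the number of Byzantine neighbours, it still does not depend on $T$, so the rate is unaffected.
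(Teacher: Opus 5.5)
Your proposal is correct and follows the paper's argument: you check $\gamma=T^{-1/2}\le 1/(4L)$ from $T\ge 16L^{2}$, then substitute to get orders $T^{-1/2}$, $T^{-1}$, $T^{-1/2}$, $T^{-1/2}$, $T^{-3/2}$ for the five terms, and your uniform-iterate remark only spells out the paper's last sentence. On the obstacle you flag, $\delta_i\le G_g$ by Lemma~\ref{lem:safe-private-validation} and $\varepsilon^{\mathrm{Byz}}_i\le 2\max_q L_qM_q$ by Lemma~\ref{lem:byzantine-distillation-bound} (since $\beta^{q,t,s}_i<1$ and the $\omega^{q,t,s}_i$ sum to one), so any Byzantine dependence enters through $G_i$ rather than $\delta_i$, and only $G^{H}_i$ from Assumption 4 must be assumed $T$-independent, as the paper implicitly does.
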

\begin{IEEEproof}
Substituting $\gamma=T^{-1/2}$, $\eta K=T^{-1/2}$, and $\alpha\leq T^{-1}$
into the theorem gives orders $T^{-1/2}$, $T^{-1}$, $T^{-1/2}$,
$T^{-1/2}$, and $T^{-3/2}$ for its five terms, respectively. The
step-size condition follows from $T^{-1/2}\leq1/(4L)$.
\end{IEEEproof}

\subsubsection*{Proof sketch}

To prove Theorem $\ref{thm:main-convergence}$, we analyze how the
public distillation stage affects the descent provided by local supervised
training. The proof proceeds in three steps:
\begin{enumerate}
\item We first derive the decrease of the local supervised objective after
the $K$ local SGD updates. This part follows the same analysis as
our previous work \cite{ma2026rdpfl}.
\item We then analyze the Byzantine knowledge after filtering and reliability
weighting within each knowledge modality. Byzantine clients are not
necessarily completely removed, but their remaining contribution to
each distillation gradient is bounded by the total weight assigned
to the retained Byzantine clients. Consequently, the magnitude of
the Byzantine distillation gradients is bounded by $G_{i}$, as established
in Lemma $\ref{lem:byzantine-distillation-bound}$.
\item We next analyze the cross-modality fusion process. The adaptive modality
weights determine the contribution of the three distillation gradients,
while private-gradient validation projects or suppresses gradients
that conflict with the local supervised direction. After validation
and fusion, the final multi-modality gradient remains bounded by $G_{i}$,
and its possible adverse effect on the local objective is further
bounded by $\delta_{i}G_{i}$, as shown in Lemma $\ref{lem:safe-private-validation}$.
\end{enumerate}
Combining the local descent with these two bounds shows that Byzantine
public distillation introduces only bounded perturbation terms into
the local optimization. Therefore, the descent achieved by local SGD
cannot be unboundedly disrupted by the Byzantine knowledge, which
leads to the convergence bound in Theorem $\ref{thm:main-convergence}.$

\subsection{Bounded Residual Byzantine Distillation}

The public-distillation terms in Theorem $\ref{thm:main-convergence}$
depend on $G_{i}$, which bounds the multi-modality distillation gradients.
We next show that $G_{i}$ consists of a bounded honest component
and a residual Byzantine component determined by the weights assigned
to retained Byzantine sources.

Let $\mathcal{Q}=\{\mathrm{pred},\mathrm{bd},\mathrm{rel}\}$. For
each $q\in\mathcal{Q}$, let $\mathcal{S}^{q,t,s}_{i}$ and $w^{q,t,s}_{ij}$
denote the retained source set and the corresponding normalized weights.
Define the residual Byzantine weight as 
\[
\beta^{q,t,s}_{i}=\sum_{j\in\mathcal{A}_{i}\cap\mathcal{S}^{q,t,s}_{i}}w^{q,t,s}_{ij}.
\]
Since the trusted local source of an honest receiver is always retained,
\[
0\leq\beta^{q,t,s}_{i}<1.
\]

Let $Q^{q,t,s}_{j\rightarrow i}$ denote the source representation
at modality $q$, and define 
\[
Q^{q,t,s}_{i}=\sum_{j\in\mathcal{S}^{q,t,s}_{i}}w^{q,t,s}_{ij}Q^{q,t,s}_{j\rightarrow i},
\]
and its normalized honest component as 
\[
Q^{q,t,s}_{i,H}=\frac{1}{1-\beta^{q,t,s}_{i}}\sum_{j\in\mathcal{H}_{i}\cap\mathcal{S}^{q,t,s}_{i}}w^{q,t,s}_{ij}Q^{q,t,s}_{j\rightarrow i}.
\]

The corresponding distillation gradients are 
\[
g^{q,t,s}_{i}=\Gamma^{t,s}_{i,q}(Q^{q,t,s}_{i}),\qquad g^{q,t,s}_{i,H}=\Gamma^{t,s}_{i,q}(Q^{q,t,s}_{i,H}).
\]

\begin{lem}
(Bounded residual Byzantine distillation). \label{lem:byzantine-distillation-bound}
Suppose Assumptions\textasciitilde 3-{}-4 hold. For every honest
client $i$, modality $q\in\mathcal{Q}$, training round $t$, and
distillation step $s$, the difference between the actual distillation
gradient and the gradient induced by the retained honest sources satisfies
\begin{equation}
\left\Vert g^{q,t,s}_{i}-g^{q,t,s}_{i,H}\right\Vert _{2}\leq2L_{q}M_{q}\beta^{q,t,s}_{i},\label{eq:byz-gradient-perturbation}
\end{equation}
where $M_{q}=\begin{cases}
\sqrt{BC}, & q\in\{\mathrm{pred},\mathrm{bd}\}\\
B, & q=\mathrm{rel}.
\end{cases}$. Consequently, 
\begin{equation}
\sum_{q\in\mathcal{Q}}\omega^{q,t,s}_{i}\left\Vert g^{q,t,s}_{i}\right\Vert _{2}\leq G_{i},\label{eq:distillation-gradient-bound}
\end{equation}
where $G_{i}=G^{H}_{i}+\varepsilon^{\mathrm{Byz}}_{i}$ and $\varepsilon^{\mathrm{Byz}}_{i}=\sup_{t,s}\sum_{q\in\mathcal{Q}}2\omega^{q,t,s}_{i}L_{q}M_{q}\beta^{q,t,s}_{i}$.
\end{lem}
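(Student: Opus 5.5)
The argument has two parts. First we bound the teacher perturbation caused by the retained Byzantine sources. Then we transfer that bound to the gradient through Assumption 3, and finally to the fused bound through Assumption 4 and the triangle inequality.

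\emph{Step 1: decomposing the teacher.} Fix $i,q,t,s$ and drop the superscripts. Write the Byzantine component as
\[
Q_{A}=\frac{1}{\beta}\sum_{j\in\mathcal{A}_i\cap\mathcal{S}}w_{ij}Q_{j\rightarrow i}
\]
when $\beta>0$. Then $Q=(1-\beta)Q_{H}+\beta Q_{A}$, and hence $Q-Q_{H}=\beta(Q_{A}-Q_{H})$. When $\beta=0$ we have $Q=Q_H$ and the claim is trivial. This identity uses only $\sum_{j\in\mathcal{S}}w_{ij}=1$ and $\beta<1$, which holds because $i\in\mathcal{S}$ always carries positive weight. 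It gives
\[
\|Q-Q_H\|_F\le\beta\left(\|Q_A\|_F+\|Q_H\|_F\right).
\]
Both $Q_A$ and $Q_H$ are convex combinations of source representations, so by convexity of the norm it suffices to show that every source representation satisfies $\|Q_{j\rightarrow i}\|_F\le M_q$, including those sent by Byzantine clients.

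\emph{Step 2: uniform bound $M_q$ on source representations.} This is where standardization is essential, and I expect it to be the main obstacle.
\begin{itemize}
\item For $q\in\{\mathrm{pred},\mathrm{bd}\}$, the source representation is the standardized logit $\widehat Z_{j\rightarrow i}=\mathcal{S}(Z_{j\rightarrow i})$. Each row has empirical standard deviation $\sigma/(\sigma+\epsilon)<1$ and zero mean, so its squared Euclidean norm is $C\sigma^2/(\sigma+\epsilon)^2\le C$. Summing over the $B$ rows gives $\|\widehat Z\|_F\le\sqrt{BC}$ for \emph{arbitrary} finite Byzantine input.
\item For $q=\mathrm{rel}$, the source representation is $R(\widehat Z)=HH^\top$. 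Each row of $H$ has norm below $1$, so every entry of $R$ lies in $[-1,1]$. Hence $\|R\|_F\le\sqrt{B^2}=B$.
\end{itemize}

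One subtlety concerns pred and bd: the actual teacher is $\mathcal{S}(\sum_j w_{ij}\widehat Z_{j\rightarrow i})$, not the raw weighted sum. I will interpret the teacher representation $Q$ in Assumption 3 as the pre-standardization weighted aggregate, so that the outer $\mathcal{S}$ is absorbed into the map $\Gamma^{t,s}_{i,q}$. This is consistent with the definition of $Q^{q,t,s}_i$ given just before the lemma, and with this reading the convex decomposition of Step 1 applies verbatim.

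\emph{Step 3: gradient bound.} Combining Steps 1 and 2 gives $\|Q-Q_H\|_F\le 2M_q\beta$. Assumption 3 then yields
\[
\|g^q-g^q_H\|_2\le L_q\|Q-Q_H\|_F\le 2L_qM_q\beta,
\]
which is \eqref{eq:byz-gradient-perturbation}.

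\emph{Step 4: the fused bound.} By the triangle inequality, $\|g^q\|_2\le\|g^q_H\|_2+2L_qM_q\beta^q$. Multiply by $\omega^q\ge0$ and sum over $q\in\mathcal{Q}$. Assumption 4 bounds the honest part by $G^H_i$, and the remainder is at most its supremum over $(t,s)$, namely $\varepsilon^{\mathrm{Byz}}_i$. This gives \eqref{eq:distillation-gradient-bound} with $G_i=G^H_i+\varepsilon^{\mathrm{Byz}}_i$.
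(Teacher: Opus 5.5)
Your proposal is correct and follows essentially the same route as the paper. The shared steps are the convex decomposition $Q=(1-\beta)Q_H+\beta Q_A$, the uniform bound $\|Q_{j\rightarrow i}\|_F\le M_q$ (from $C\sigma^2/(\sigma+\epsilon)^2\le C$ and from the entries of $R$ lying in $[-1,1]$), then Assumption 3, and finally the triangle inequality with Assumption 4. Your reading of the teacher representation as the pre-standardization weighted aggregate, with the outer $\mathcal{S}$ absorbed into $\Gamma^{t,s}_{i,q}$, is exactly how the paper defines $Q^{q,t,s}_i$ immediately before the lemma.
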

\begin{IEEEproof}
We first note that the normalized weights are positive and sum to
one over $\mathcal{S}^{q,t,s}_{i}$. Since the trusted local source
$i$ of an honest receiver is always retained, 
\[
\beta^{q,t,s}_{i}=1-\sum_{j\in\mathcal{H}_{i}\cap\mathcal{S}^{q,t,s}_{i}}w^{q,t,s}_{ij}\leq1-w^{q,t,s}_{ii}<1.
\]
Hence, $0\leq\beta^{q,t,s}_{i}<1$.

We next bound the representation contributed by each retained source.
For a standardized logit vector $\widehat{z}\in\mathbb{R}^{C}$, 
\[
\left\Vert \widehat{z}\right\Vert ^{2}_{2}=\frac{\sum^{C}_{c=1}(z_{c}-\mu(z))^{2}}{(\sigma(z)+\epsilon)^{2}}=\frac{C\sigma^{2}(z)}{(\sigma(z)+\epsilon)^{2}}\leq C.
\]
Therefore, for a public mini-batch of size $B$, 
\[
\left\Vert \widehat{Z}^{t,s}_{j\rightarrow i}\right\Vert _{F}\leq\sqrt{BC}.
\]
For the prediction correlation modality, each entry of $R(\widehat{Z}^{t,s}_{j\rightarrow i})$
is the inner product of two row-normalized vectors and therefore has
magnitude at most one. Since the relation matrix has size $B\times B$,
\[
\left\Vert R\left(\widehat{Z}^{t,s}_{j\rightarrow i}\right)\right\Vert _{F}\leq B.
\]
Thus, the corresponding source representations satisfy 
\begin{equation}
\left\Vert Q^{q,t,s}_{j\rightarrow i}\right\Vert _{F}\leq M_{q}.\label{eq:source-representation-bound}
\end{equation}
For $\beta^{q,t,s}_{i}>0$, define the normalized Byzantine component
as 
\[
Q^{q,t,s}_{i,A}=\frac{1}{\beta^{q,t,s}_{i}}\sum_{j\in\mathcal{A}_{i}\cap\mathcal{S}^{q,t,s}_{i}}w^{q,t,s}_{ij}Q^{q,t,s}_{j\rightarrow i},
\]
and the normalized honest component as 
\[
Q^{q,t,s}_{i,H}=\frac{1}{1-\beta^{q,t,s}_{i}}\sum_{j\in\mathcal{H}_{i}\cap\mathcal{S}^{q,t,s}_{i}}w^{q,t,s}_{ij}Q^{q,t,s}_{j\rightarrow i}.
\]

The retained-source representation can then be decomposed as 
\begin{equation}
Q^{q,t,s}_{i}=\left(1-\beta^{q,t,s}_{i}\right)Q^{q,t,s}_{i,H}+\beta^{q,t,s}_{i}Q^{q,t,s}_{i,A}.\label{eq:teacher-representation-decomposition}
\end{equation}
Since $Q^{q,t,s}_{i,H}$ and $Q^{q,t,s}_{i,A}$ are convex combinations
of representations satisfying $\eqref{eq:source-representation-bound}$,
\[
\left\Vert Q^{q,t,s}_{i,H}\right\Vert _{F}\leq M_{q},\qquad\left\Vert Q^{q,t,s}_{i,A}\right\Vert _{F}\leq M_{q}.
\]
Therefore, 
\begin{align}
\left\Vert Q^{q,t,s}_{i}-Q^{q,t,s}_{i,H}\right\Vert _{F} & =\beta^{q,t,s}_{i}\left\Vert Q^{q,t,s}_{i,A}-Q^{q,t,s}_{i,H}\right\Vert _{F}\nonumber \\
 & \leq2M_{q}\beta^{q,t,s}_{i}.\label{eq:teacher-representation-perturbation}
\end{align}

If $\beta^{q,t,s}_{i}=0$, no Byzantine source is retained and the
same bound holds trivially. By Assumption\textasciitilde 3, 
\begin{align*}
\left\Vert g^{q,t,s}_{i}-g^{q,t,s}_{i,H}\right\Vert _{2} & =\left\Vert \Gamma^{t,s}_{i,q}\!\left(Q^{q,t,s}_{i}\right)-\Gamma^{t,s}_{i,q}\left(Q^{q,t,s}_{i,H}\right)\right\Vert _{2}\\
 & \leq L_{q}\left\Vert Q^{q,t,s}_{i}-Q^{q,t,s}_{i,H}\right\Vert _{F}\\
 & \leq2L_{q}M_{q}\beta^{q,t,s}_{i},
\end{align*}
 which proves $\eqref{eq:byz-gradient-perturbation}$. Hence, the
effect of the retained Byzantine sources on each distillation gradient
is controlled by their total retained weight.

Finally, by the triangle inequality, 
\begin{align}
 & \sum_{q\in\mathcal{Q}}\omega^{q,t,s}_{i}\left\Vert g^{q,t,s}_{i}\right\Vert _{2}\nonumber \\
\leq & \sum_{q\in\mathcal{Q}}\omega^{q,t,s}_{i}\left\Vert g^{q,t,s}_{i,H}\right\Vert _{2}+\sum_{q\in\mathcal{Q}}\omega^{q,t,s}_{i}\left\Vert g^{q,t,s}_{i}-g^{q,t,s}_{i,H}\right\Vert _{2}\nonumber \\
\leq & G^{H}_{i}+\sum_{q\in\mathcal{Q}}2\omega^{q,t,s}_{i}L_{q}M_{q}\beta^{q,t,s}_{i}\nonumber \\
\leq & G^{H}_{i}+\varepsilon^{\mathrm{Byz}}_{i}=G_{i},
\end{align}
where the second inequality follows from Assumption\textasciitilde 4
and $\eqref{eq:byz-gradient-perturbation}$.

This proves $\eqref{eq:distillation-gradient-bound}$.
\end{IEEEproof}

\subsection{Bound of Private-Gradient Validation}

The next lemma analyzes the safety of private-gradient validation
in the cross-modality fusion process. It shows that the validated
multi-modality gradient remains bounded by $G_{i}$, while its possible
adverse effect on the local objective is bounded by $\delta_{i}G_{i}$.
\begin{lem}
(Safety of private-gradient validation) \label{lem:safe-private-validation}
Under Assumption 2 and Lemma $\ref{lem:byzantine-distillation-bound}$,
for every honest client $i$, training round $t$, and distillation
step $s$, the validated multi-modality gradient satisfies 
\begin{equation}
\left\Vert g^{\mathrm{MG},t,s}_{i}\right\Vert _{2}\leq G_{i}.\label{eq:validated-mg-gradient-bound}
\end{equation}
Moreover, there exists a finite constant $\delta_{i}$ with $\delta^{2}_{i}\leq G^{2}_{g}$
such that 
\begin{equation}
\mathbb{E}\left[\left\langle \nabla F_{i}\left(\widetilde{\theta}^{t,s-1}_{i}\right),g^{\mathrm{MG},t,s}_{i}\right\rangle \mid\mathcal{F}_{t,s}\right]\geq-\delta_{i}G_{i},\label{eq:true-gradient-mg-bound}
\end{equation}
where $\mathcal{F}_{t,s}$ denotes the information available before
sampling the private mini-batch used to compute $g^{\mathrm{sup},t,s}_{i}$.
\end{lem}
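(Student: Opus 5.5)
The plan is to prove the two claims separately: first a deterministic norm bound on $g^{\mathrm{MG},t,s}_{i}$, then a conditional-expectation bound on its inner product with $\nabla F_{i}$.

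\textbf{Norm bound.} Validation never increases the norm of a modality gradient.
\begin{itemize}
\item For $q\in\{\mathrm{pred},\mathrm{bd}\}$, the update in Eq.~\eqref{eq:gradient-projection} either leaves $g^{q,t,s}_{i}$ unchanged or replaces it by its orthogonal projection onto the complement of $\mathrm{span}\{g^{\mathrm{sup},t,s}_{i}\}$. Since orthogonal projections are nonexpansive, $\|\widetilde g^{q,t,s}_{i}\|_2\le\|g^{q,t,s}_{i}\|_2$.
\item For $q=\mathrm{rel}$, Eq.~\eqref{eq:relation-validation} multiplies $g^{\mathrm{rel},t,s}_{i}$ by an indicator in $\{0,1\}$, so again $\|\widetilde g^{\mathrm{rel},t,s}_{i}\|_2\le\|g^{\mathrm{rel},t,s}_{i}\|_2$.
\end{itemize}
Applying the triangle inequality to Eq.~\eqref{eq:combined-gradient}, with $\omega^{q,t,s}_{i}\ge0$, gives
\[
\|g^{\mathrm{MG},t,s}_{i}\|_2\le\sum_{q}\omega^{q,t,s}_{i}\|\widetilde g^{q,t,s}_{i}\|_2\le\sum_q\omega^{q,t,s}_{i}\|g^{q,t,s}_{i}\|_2 .
\]
By \eqref{eq:distillation-gradient-bound} of Lemma~\ref{lem:byzantine-distillation-bound}, the right-hand side is at most $G_i$, which proves \eqref{eq:validated-mg-gradient-bound}.

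\textbf{Inner-product bound.} Write $\theta=\widetilde\theta^{t,s-1}_{i}$, $g^{\mathrm{sup}}=g^{\mathrm{sup},t,s}_{i}$ and $e=\nabla F_i(\theta)-g^{\mathrm{sup}}$. By construction $\langle g^{\mathrm{sup}},g^{\mathrm{MG},t,s}_{i}\rangle\ge0$, so
\[
\langle\nabla F_i(\theta),g^{\mathrm{MG},t,s}_{i}\rangle\ge\langle e,g^{\mathrm{MG},t,s}_{i}\rangle\ge-\|e\|_2\,G_i,
\]
using Cauchy--Schwarz and the norm bound just proved. Taking $\mathbb{E}[\cdot\mid\mathcal F_{t,s}]$ and applying Jensen's inequality gives $\mathbb{E}\|e\|_2\le(\mathbb{E}\|e\|_2^2)^{1/2}$. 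We then define $\delta_i$ as the supremum over $t,s$ of this conditional root-mean-square error.

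\textbf{Bounding $\delta_i$.} Conditioned on $\mathcal F_{t,s}$, the point $\theta$ is fixed, so Assumption~2 applies with unbiasedness. The bias–variance identity gives
\[
\mathbb{E}\|g^{\mathrm{sup}}-\nabla F_i(\theta)\|_2^2=\mathbb{E}\|g^{\mathrm{sup}}\|_2^2-\|\nabla F_i(\theta)\|_2^2\le G_g^2 .
\]
Hence $\delta_i^2\le G_g^2$, and $\delta_i$ is finite.

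\textbf{Main obstacle.} The delicate point is that $g^{\mathrm{MG},t,s}_{i}$ depends on the same private mini-batch as $g^{\mathrm{sup}}$, through the validation step. We therefore cannot take the expectation inside the inner product to use unbiasedness directly. The route above avoids this: it bounds the inner product pathwise by $-\|e\|_2G_i$, using only the deterministic bound $\|g^{\mathrm{MG},t,s}_{i}\|_2\le G_i$, and takes expectations only afterwards. Because $G_i$ holds uniformly over all realisations, the correlation between $e$ and $g^{\mathrm{MG},t,s}_{i}$ does not matter.
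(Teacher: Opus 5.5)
Your proposal is correct and follows essentially the same route as the paper: you use non-expansiveness of the projection and of the indicator gate, then the triangle inequality and Lemma~\ref{lem:byzantine-distillation-bound} for the norm bound. For the inner-product bound you use the alignment $\langle g^{\mathrm{sup},t,s}_{i},g^{\mathrm{MG},t,s}_{i}\rangle\ge0$, a pathwise Cauchy--Schwarz bound on the error $\nabla F_i(\widetilde\theta^{t,s-1}_i)-g^{\mathrm{sup},t,s}_i$, Jensen's inequality, and the bias--variance identity giving $\delta_i^2\le G_g^2$, exactly as the paper does (it merely derives the $\delta_i$ bound first, and your remark that the pathwise bound sidesteps the dependence of $g^{\mathrm{MG},t,s}_{i}$ on the private mini-batch is only implicit there).
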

\begin{IEEEproof}
We first characterize the estimation error of the private supervised
gradient. At distillation step $s$, $g^{\mathrm{sup},t,s}_{i}$ is
computed from a private mini-batch at the current model $\widetilde{\theta}^{t,s-1}_{i}$.
Conditioning on $\mathcal{F}_{t,s}$ fixes $\widetilde{\theta}^{t,s-1}_{i}$,
and the remaining randomness comes only from the sampled private mini-batch.
By the unbiasedness condition in Assumption 2, 
\begin{equation}
\mathbb{E}\left[g^{\mathrm{sup},t,s}_{i}\mid\mathcal{F}_{t,s}\right]=\nabla F_{i}\left(\widetilde{\theta}^{t,s-1}_{i}\right).\label{eq:private-gradient-unbiased}
\end{equation}
Using this unbiasedness together with the bounded second moment in
Assumption 2 gives 
\begin{align}
 & \mathbb{E}\left[\left\Vert g^{\mathrm{sup},t,s}_{i}-\nabla F_{i}\left(\widetilde{\theta}^{t,s-1}_{i}\right)\right\Vert ^{2}_{2}\mid\mathcal{F}_{t,s}\right]\nonumber \\
= & \mathbb{E}\left[\left\Vert g^{\mathrm{sup},t,s}_{i}\right\Vert ^{2}_{2}\mid\mathcal{F}_{t,s}\right]-\left\Vert \nabla F_{i}\left(\widetilde{\theta}^{t,s-1}_{i}\right)\right\Vert ^{2}_{2}\nonumber \\
\leq & G^{2}_{g}.\label{eq:private-gradient-error-bound}
\end{align}
We denote a uniform upper bound on this mean-square estimation error
by $\delta^{2}_{i}$, so that 
\begin{equation}
\mathbb{E}\left[\left\Vert g^{\mathrm{sup},t,s}_{i}-\nabla F_{i}\left(\widetilde{\theta}^{t,s-1}_{i}\right)\right\Vert ^{2}_{2}\mid\mathcal{F}_{t,s}\right]\leq\delta^{2}_{i},\qquad\delta^{2}_{i}\leq G^{2}_{g}.\label{eq:validation-error-delta}
\end{equation}

We next consider the validated public gradient. By the exact projection
used for the class prediction and boundary decision gradients, their
components opposing the private supervised gradient are removed. The
relation gradient is retained only when it has a positive inner product
with the private supervised gradient. Therefore, the validation mechanism
defined in the method section guarantees 
\begin{equation}
\left\langle g^{\mathrm{sup},t,s}_{i},g^{\mathrm{MG},t,s}_{i}\right\rangle \geq0.\label{eq:mg-validation-alignment}
\end{equation}
The validation operations do not increase the norm of any modality-wise
gradient. For $q\in\{\mathrm{pred},\mathrm{bd}\}$, the exact projection
removes only the component parallel and opposite to $g^{\mathrm{sup},t,s}_{i}$
and hence 
\begin{equation}
\left\Vert \widetilde{g}^{q,t,s}_{i}\right\Vert _{2}\leq\left\Vert g^{q,t,s}_{i}\right\Vert _{2}.\label{eq:projection-norm-nonincreasing}
\end{equation}
For the prediction correlation modality, $\widetilde{g}^{\mathrm{rel},t,s}_{i}$
is either $g^{\mathrm{rel},t,s}_{i}$ or the zero vector, and therefore
\begin{equation}
\left\Vert \widetilde{g}^{\mathrm{rel},t,s}_{i}\right\Vert _{2}\leq\left\Vert g^{\mathrm{rel},t,s}_{i}\right\Vert _{2}.\label{eq:relation-gating-norm}
\end{equation}
Since all modality weights are non-negative, the triangle inequality
and Lemma$~\ref{lem:byzantine-distillation-bound}$ yield 
\begin{align}
\left\Vert g^{\mathrm{MG},t,s}_{i}\right\Vert _{2} & =\left\Vert \sum_{q\in\mathcal{Q}}\omega^{q,t,s}_{i}\widetilde{g}^{q,t,s}_{i}\right\Vert _{2}\nonumber \\
 & \leq\sum_{q\in\mathcal{Q}}\omega^{q,t,s}_{i}\left\Vert \widetilde{g}^{q,t,s}_{i}\right\Vert _{2}\nonumber \\
 & \leq\sum_{q\in\mathcal{Q}}\omega^{q,t,s}_{i}\left\Vert g^{q,t,s}_{i}\right\Vert _{2}\nonumber \\
 & \leq G_{i}.\label{eq:mg-gradient-bound-proof}
\end{align}
This proves Eq.$~\eqref{eq:validated-mg-gradient-bound}$.

Finally, we relate the validated public gradient to the true local
gradient. We write 
\begin{align}
 & \left\langle \nabla F_{i}\left(\widetilde{\theta}^{t,s-1}_{i}\right),g^{\mathrm{MG},t,s}_{i}\right\rangle \nonumber \\
= & \left\langle g^{\mathrm{sup},t,s}_{i},g^{\mathrm{MG},t,s}_{i}\right\rangle +\left\langle \nabla F_{i}\left(\widetilde{\theta}^{t,s-1}_{i}\right)-g^{\mathrm{sup},t,s}_{i},g^{\mathrm{MG},t,s}_{i}\right\rangle .\label{eq:true-gradient-decomposition}
\end{align}
Using Eq.$\eqref{eq:mg-validation-alignment}$ and the Cauchy-{}-Schwarz
inequality, 
\begin{align}
 & \left\langle \nabla F_{i}\left(\widetilde{\theta}^{t,s-1}_{i}\right),g^{\mathrm{MG},t,s}_{i}\right\rangle \nonumber \\
\geq & -\left\Vert \nabla F_{i}\left(\widetilde{\theta}^{t,s-1}_{i}\right)-g^{\mathrm{sup},t,s}_{i}\right\Vert _{2}\left\Vert g^{\mathrm{MG},t,s}_{i}\right\Vert _{2}\nonumber \\
\geq & -G_{i}\left\Vert \nabla F_{i}\left(\widetilde{\theta}^{t,s-1}_{i}\right)-g^{\mathrm{sup},t,s}_{i}\right\Vert _{2}.\label{eq:true-gradient-pointwise-bound}
\end{align}
Taking conditional expectation and applying Jensen's inequality together
with Eq.$\eqref{eq:validation-error-delta}$, we obtain 
\begin{align}
 & \mathbb{E}\left[\left\langle \nabla F_{i}\left(\widetilde{\theta}^{t,s-1}_{i}\right),g^{\mathrm{MG},t,s}_{i}\right\rangle \mid\mathcal{F}_{t,s}\right]\nonumber \\
\geq & -G_{i}\mathbb{E}\left[\left\Vert \nabla F_{i}\left(\widetilde{\theta}^{t,s-1}_{i}\right)-g^{\mathrm{sup},t,s}_{i}\right\Vert _{2}\mid\mathcal{F}_{t,s}\right]\nonumber \\
\geq & -G_{i}\sqrt{\mathbb{E}\left[\left\Vert \nabla F_{i}\left(\widetilde{\theta}^{t,s-1}_{i}\right)-g^{\mathrm{sup},t,s}_{i}\right\Vert ^{2}_{2}\mid\mathcal{F}_{t,s}\right]}\nonumber \\
\geq & -\delta_{i}G_{i}.
\end{align}
Thus, although the private supervised gradient is only a stochastic
estimate of the true local gradient, the possible adverse effect of
the validated public-distillation direction is bounded by $\delta_{i}G_{i}$.

This completes the proof.
\end{IEEEproof}

\subsection{Proof of the Main Result}

We now prove Theorem $\ref{thm:main-convergence}$ by combining the
descent achieved during local supervised training with the bounded
effect of the subsequent public-distillation stage. The local-training
analysis gives the local-update drift and stochastic-gradient terms,
while Lemmas$~\ref{lem:byzantine-distillation-bound}$ and$~\ref{lem:safe-private-validation}$
control the two public-distillation terms.
\begin{IEEEproof}
The proof consists of three steps. (A) We use the local-SGD properties
established in our previous work \cite{ma2026rdpfl} to bound the
change of the local supervised objective after the $K$ private updates.
(B) Lemma $\ref{lem:byzantine-distillation-bound}$ provides the bound
$G_{i}$ on the Byzantine-affected distillation gradients, and Lemma
$\ref{lem:safe-private-validation}$ uses this bound to control the
subsequent public-distillation updates. (C) We combine the two stage-wise
bounds over one training round and telescope the resulting inequality
over $T$ rounds.

(A) For the local supervised stage, define the average local stochastic
gradient as 
\begin{equation}
u^{t}_{i}=\frac{1}{K}\sum^{K-1}_{k=0}g_{i}\left(\theta^{t,k}_{i};\xi^{t,k}_{i}\right),
\end{equation}
so that 
\begin{equation}
\bar{\theta}^{t}_{i}=\theta^{t}_{i}-\gamma u^{t}_{i},\qquad\gamma=\eta K.
\end{equation}
Under Assumptions\textasciitilde 1-{}-2, the local-SGD result established
in \cite{ma2026rdpfl} gives 
\begin{equation}
\mathbb{E}\left[u^{t}_{i}\mid\theta^{t}_{i}\right]=\nabla F_{i}(\theta^{t}_{i})+b^{t}_{i},
\end{equation}
with 
\begin{equation}
\left\Vert b^{t}_{i}\right\Vert _{2}\leq\frac{LG_{g}}{2}\eta K,\quad\mathbb{E}\left[\left\Vert u^{t}_{i}-\mathbb{E}[u^{t}_{i}\mid\theta^{t}_{i}]\right\Vert ^{2}_{2}\mid\theta^{t}_{i}\right]\leq G^{2}_{g}.\label{eq:local-sgd-properties}
\end{equation}
 The bias bound follows from $L$-smoothness and $\mathbb{E}\|\theta^{t,k}_{i}-\theta^{t}_{i}\|_{2}\leq\eta kG_{g}$,
averaged over $k=0,\ldots,K-1$; Jensen's inequality and Assumption
2 give the second-moment bound. By the $L$-smoothness of $F_{i}$,
\begin{equation}
F_{i}(\bar{\theta}^{t}_{i})\leq F_{i}(\theta^{t}_{i})-\gamma\left\langle \nabla F_{i}(\theta^{t}_{i}),u^{t}_{i}\right\rangle +\frac{L\gamma^{2}}{2}\left\Vert u^{t}_{i}\right\Vert ^{2}_{2}.
\end{equation}
Taking expectation and using $\eqref{eq:local-sgd-properties}$, together
with $|\langle a,b\rangle|\leq\frac{1}{4}\|a\|^{2}_{2}+\|b\|^{2}_{2}$
and $L\gamma\leq1/4$, gives 
\begin{align}
\mathbb{E}\left[F_{i}(\bar{\theta}^{t}_{i})\right] & \leq\mathbb{E}\left[F_{i}(\theta^{t}_{i})\right]-\frac{\gamma}{2}\mathbb{E}\left[\left\Vert \nabla F_{i}(\theta^{t}_{i})\right\Vert ^{2}_{2}\right]\label{eq:local-stage-final-bound}\\
 & \quad+\frac{5\gamma}{16}L^{2}G^{2}_{g}\eta^{2}K^{2}+\frac{L\gamma^{2}}{2}G^{2}_{g}.\nonumber 
\end{align}

(B) We now analyze the public-distillation stage. Let $\widetilde{\theta}^{t,0}_{i}=\bar{\theta}^{t}_{i}$,
and for $s=1,\ldots,S$ let 
\begin{equation}
\widetilde{\theta}^{t,s}_{i}=\widetilde{\theta}^{t,s-1}_{i}-\alpha_{t}g^{\mathrm{MG},t,s}_{i},\qquad\alpha_{t}=\eta_{d}\lambda_{t}.\label{eq:public-update-main-proof}
\end{equation}
The model at the beginning of the next training round is $\theta^{t+1}_{i}=\widetilde{\theta}^{t,S}_{i}$.
By the $L$-smoothness of $F_{i}$, 
\begin{align}
F_{i}\left(\widetilde{\theta}^{t,s}_{i}\right) & \leq F_{i}\left(\widetilde{\theta}^{t,s-1}_{i}\right)-\alpha_{t}\left\langle \nabla F_{i}\left(\widetilde{\theta}^{t,s-1}_{i}\right),g^{\mathrm{MG},t,s}_{i}\right\rangle \label{eq:public-smoothness-step}\\
 & \quad+\frac{L\alpha^{2}_{t}}{2}\left\Vert g^{\mathrm{MG},t,s}_{i}\right\Vert ^{2}_{2}.\nonumber 
\end{align}

By Lemma $\ref{lem:byzantine-distillation-bound}$, the multi-modality
distillation gradients are bounded by $G_{i}$. Lemma $\ref{lem:safe-private-validation}$
further gives 
\begin{align*}
 & \mathbb{E}\left[\left\langle \nabla F_{i}\left(\widetilde{\theta}^{t,s-1}_{i}\right),g^{\mathrm{MG},t,s}_{i}\right\rangle \mid\mathcal{F}_{t,s}\right]\\
\geq & -\delta_{i}G_{i},\qquad\left\Vert g^{\mathrm{MG},t,s}_{i}\right\Vert _{2}\\
\leq & G_{i}.
\end{align*}

Taking conditional expectation in Eq.$~\eqref{eq:public-smoothness-step}$
therefore gives 
\begin{equation}
\mathbb{E}\left[F_{i}\left(\widetilde{\theta}^{t,s}_{i}\right)\mid\mathcal{F}_{t,s}\right]\leq F_{i}\left(\widetilde{\theta}^{t,s-1}_{i}\right)+\alpha_{t}\delta_{i}G_{i}+\frac{L\alpha^{2}_{t}}{2}G^{2}_{i}.\label{eq:public-single-step-bound}
\end{equation}
Applying this bound successively for $s=1,\ldots,S$ and using the
tower property, together with $\widetilde{\theta}^{t,0}_{i}=\bar{\theta}^{t}_{i}$
and $\widetilde{\theta}^{t,S}_{i}=\theta^{t+1}_{i}$, yields 
\begin{equation}
\mathbb{E}\left[F_{i}\left(\theta^{t+1}_{i}\right)\right]\leq\mathbb{E}\left[F_{i}\left(\bar{\theta}^{t}_{i}\right)\right]+S\alpha_{t}\delta_{i}G_{i}+\frac{LS\alpha^{2}_{t}}{2}G^{2}_{i}.\label{eq:public-stage-final-bound}
\end{equation}

(C) Combining $\eqref{eq:local-stage-final-bound}$ and $\eqref{eq:public-stage-final-bound}$,
and using $\alpha_{t}\leq\alpha$, gives 
\begin{align}
\mathbb{E}\left[F_{i}\left(\theta^{t+1}_{i}\right)\right]\leq & \mathbb{E}\left[F_{i}\left(\theta^{t}_{i}\right)\right]-\frac{\gamma}{2}\mathbb{E}\left[\left\Vert \nabla F_{i}\left(\theta^{t}_{i}\right)\right\Vert ^{2}_{2}\right]\nonumber \\
 & \quad+\frac{5\gamma}{16}L^{2}G^{2}_{g}\eta^{2}K^{2}+\frac{L\gamma^{2}}{2}G^{2}_{g}\nonumber \\
 & \quad+S\alpha\delta_{i}G_{i}+\frac{LS\alpha^{2}}{2}G^{2}_{i}.\label{eq:one-round-main-bound}
\end{align}
Summing $\eqref{eq:one-round-main-bound}$ over $t=0,\ldots,T-1$
yields 
\begin{align*}
 & \frac{\gamma}{2}\sum^{T-1}_{t=0}\mathbb{E}\left[\left\Vert \nabla F_{i}\left(\theta^{t}_{i}\right)\right\Vert ^{2}_{2}\right]\\
\leq & F_{i}\left(\theta^{0}_{i}\right)-\mathbb{E}\left[F_{i}\left(\theta^{T}_{i}\right)\right]\\
 & \quad+\frac{5\gamma T}{16}L^{2}G^{2}_{g}\eta^{2}K^{2}+\frac{L\gamma^{2}T}{2}G^{2}_{g}\\
 & \quad+TS\alpha\delta_{i}G_{i}+\frac{LTS\alpha^{2}}{2}G^{2}_{i}.
\end{align*}
Since $F_{i}(\theta)\geq F^{\inf}_{i}$ by Assumption\textasciitilde 1,
\begin{align*}
 & \frac{\gamma}{2}\sum^{T-1}_{t=0}\mathbb{E}\left[\left\Vert \nabla F_{i}\left(\theta^{t}_{i}\right)\right\Vert ^{2}_{2}\right]\\
\leq & F_{i}\left(\theta^{0}_{i}\right)-F^{\inf}_{i}\\
 & \quad+\frac{5\gamma T}{16}L^{2}G^{2}_{g}\eta^{2}K^{2}+\frac{L\gamma^{2}T}{2}G^{2}_{g}\\
 & \quad+TS\alpha\delta_{i}G_{i}+\frac{LTS\alpha^{2}}{2}G^{2}_{i}.
\end{align*}
Dividing both sides by $\gamma T/2$ gives 
\begin{align*}
 & \frac{1}{T}\sum^{T-1}_{t=0}\mathbb{E}\left[\left\Vert \nabla F_{i}\left(\theta^{t}_{i}\right)\right\Vert ^{2}_{2}\right]\\
\leq & \frac{2\left(F_{i}\left(\theta^{0}_{i}\right)-F^{\inf}_{i}\right)}{\gamma T}+\frac{5}{8}L^{2}G^{2}_{g}\eta^{2}K^{2}\\
 & \quad+L\gamma G^{2}_{g}+\frac{2S\alpha\delta_{i}G_{i}}{\gamma}+\frac{LS\alpha^{2}G^{2}_{i}}{\gamma}.
\end{align*}
This is the bound stated in Theorem$~\ref{thm:main-convergence}$.
Moreover, since $G_{i}=G^{H}_{i}+\varepsilon^{\mathrm{Byz}}_{i}$,
smaller residual Byzantine weights reduce $\varepsilon^{\mathrm{Byz}}_{i}$
and tighten the Byzantine-dependent terms in the bound.
\end{IEEEproof}

\section{Experiments\label{sec:Experiments}}

\subsection{Experimental Setup}

\subsubsection{Datasets}

We evaluate the proposed method on CIFAR-10 and CIFAR-100. For each
dataset, $10\%$ of the official training set is used as an unlabeled
public dataset, $5\%$ is used for validation, and the remaining samples
are partitioned among clients as private labeled data. The official
test set is used only for final evaluation. Labels of the public samples
are not used during teacher construction or distillation.

\subsubsection{Federated Setting}

We simulate a fully connected decentralized system with $N=10$ clients
on a single machine. No central server performs model, gradient, or
prediction aggregation. The coordinator only schedules computation
and routes messages between clients. Among the ten clients, three
are Byzantine: 
\[
\mathcal{A}=\{7,8,9\},
\]
and the remaining clients are honest: 
\[
\mathcal{H}=\{0,1,2,3,4,5,6\}.
\]
Only honest clients are included in the reported evaluation metrics.

\subsubsection{Data and Model Heterogeneity}

We use label-skewed Dirichlet partitioning to generate non-IID private
data. Unless otherwise specified, the concentration parameter is set
to $\alpha=0.5$. Each client receives at least 100 private training
samples. To introduce model heterogeneity, clients use CIFAR-compatible
variants of ResNet-18\cite{he2016deep}, MobileNetV2\cite{sandler2018mobilenetv2},
ShuffleNetV2\cite{ma2018shufflenet}, and VGG-11\cite{simonyan2015very}.
Different architectures are assigned to different clients as shown
in Table \ref{tab:Model-Architecture}.

\begin{table}[tbh]
\caption{Heterogeneous model architectures across clients.\label{tab:Model-Architecture}}

\centering{}%
\begin{tabular}{|c|c|}
\hline 
Client & Architecture\tabularnewline
\hline 
\hline 
Client 0 \& Client 4 \& Client 8 & ResNet-18\tabularnewline
\hline 
Client 1 \& Client 5 \& Client 9 & MobileNetV2\tabularnewline
\hline 
Client 2 \& Client 6 & ShuffleNetV2\tabularnewline
\hline 
Client 3 \& Client 7 & VGG-11\tabularnewline
\hline 
\end{tabular}
\end{table}

\subsubsection{Training and Distillation Settings}

Each experiment runs for 120 training rounds. In each round, every
client first performs one epoch of supervised training on its private
data. The local optimizer is SGD with momentum $0.9$, batch size
$64$, initial learning rate $0.05$, and weight decay $5\times10^{-4}$.
The learning rate is multiplied by $0.1$ at $50\%$ and $75\%$ of
the total training rounds.

The proposed method then processes two public mini-batches per round,
each with batch size $256$. Public logits are standardized row-wise
before reliability estimation. Class prediction, boundary decision,
and prediction correlation discrepancies are used for client selection
and weighting, followed by the construction of three distillation
teachers. The three distillation gradients are validated using a private
supervised gradient before they are combined for the public update.
The principal distillaiton hyperparameters are listed in Table \ref{tab:hyperparameters_distillation}.

\begin{table}[tbh]
\caption{Hyperparameters of Distillation Setting.\label{tab:hyperparameters_distillation}}

\centering{}%
\begin{tabular}{|c|c|}
\hline 
Hyperparameters & Value\tabularnewline
\hline 
\hline 
$T_{\mathrm{kd}}$ & 2.0\tabularnewline
\hline 
$\eta_{\mathrm{kd}}$ & 0.01\tabularnewline
\hline 
$\tau$ & 0.5\tabularnewline
\hline 
$\lambda$ & 1.0\tabularnewline
\hline 
$\epsilon$ & $10^{-8}$\tabularnewline
\hline 
\end{tabular}
\end{table}

\subsubsection{Byzantine Attacks}

We evaluate four Byzantine attacks together with a benign setting.
In the Gaussian attack, malicious clients replace their outgoing messages
with random Gaussian values. The Sign-flip attack reverses the sign
of the transmitted values. The Targeted attack increases the preference
for a selected target class, while the Bias attack adds a fixed class-wise
bias to the outgoing messages.

For the proposed method and other output-space methods, the attacks
are applied to the exchanged logits. Byzantine clients may send different
manipulated logits to different honest clients.

\subsubsection{Evaluation Metrics}

Test performance is reported only for honest clients. For honest client
$i$, let $Acc_{i}$ denote its test accuracy. We report 
\begin{equation}
Acc_{\mathrm{mean}}=\frac{1}{|\mathcal{H}|}\sum_{i\in\mathcal{H}}Acc_{i},\qquad Acc_{\mathrm{worst}}=\min_{i\in\mathcal{H}}Acc_{i}.
\end{equation}
$Acc_{\mathrm{mean}}$ measures the overall performance of honest
clients, while $Acc_{\mathrm{worst}}$ measures the performance of
the most affected honest client. The latter is particularly useful
when Byzantine clients send different messages to different receivers.

\subsubsection{Compared Methods}

The primary comparison includes public-distillation methods that support
the same heterogeneous model assignment, private-data partition, public
batches, training schedule, and logit-space Byzantine attacks:
\begin{itemize}
\item FedMD\cite{li2019fedmd}: prediction-based distillation using the
arithmetic mean of the available source logits;
\item FedDF\cite{lin2020ensemble}: public ensemble distillation using the
arithmetic mean of source probabilities without parameter averaging;
\item Ours: the proposed robust decentralized federated distillation method
with multi-modality knowledge collaboration.
\end{itemize}

\subsection{Main Comparison under Heterogeneous Models}

\subsubsection{CIFAR-10 Results}

Table $\ref{tab:exp-main-cifar10}$ reports the CIFAR-10 test results.
Ours achieves the highest mean accuracy under None, Gaussian, and
Sign-flip, while FedDF and FedMD obtain the highest mean accuracy
under Targeted and Bias, respectively. More importantly, Ours achieves
the highest worst-client accuracy in all five distillation conditions.

\begin{table}[tbh]
\caption{CIFAR-10 accuracy ($\%$) with heterogeneous models and Dirichlet
non-IID data. ''Mean'' and ''Worst'' denote mean and worst honest-client
accuracy.\label{tab:exp-main-cifar10}}

\centering{}%
\begin{tabular}{|c|c|c|c|c|c|c|}
\hline 
\multirow{2}{*}{Attack} & \multicolumn{2}{c|}{FedMD} & \multicolumn{2}{c|}{FedDF} & \multicolumn{2}{c|}{Ours}\tabularnewline
\cline{2-7}
 & Mean & Worst & Mean & Worst & Mean & Worst\tabularnewline
\hline 
None & 51.28 & 32.43 & 52.08 & 39.71 & 54.79 & 45.66\tabularnewline
\hline 
Gaussian & 48.96 & 33.94 & 50.31 & 34.92 & 51.92 & 42.13\tabularnewline
\hline 
Sign-flip & 49.09 & 35.27 & 49.72 & 35.94 & 50.01 & 39.33\tabularnewline
\hline 
Targeted & 51.57 & 33.69 & 52.66 & 38.33 & 51.50 & 39.90\tabularnewline
\hline 
Bias & 53.44 & 39.51 & 51.94 & 36.89 & 52.57 & 42.23\tabularnewline
\hline 
Average & 50.87 & 34.97 & 51.34 & 37.16 & 52.16 & 41.85\tabularnewline
\hline 
\end{tabular}
\end{table}

Averaged over the five conditions, Ours improves mean accuracy by
$0.82$ percentage points over FedDF, the stronger primary baseline
on this aggregate metric. Its average worst-client accuracy is $41.85\%$,
which is $4.69$ points above FedDF and $6.88$ points above FedMD.
The larger gain in worst-client accuracy indicates that receiver-specific
robust teacher construction primarily benefits vulnerable clients
rather than only the best-performing model.

\subsubsection{CIFAR-100 Results}

Table $\ref{tab:exp-main-cifar100}$ presents the CIFAR-100 results.
Ours obtains the highest mean and worst-client accuracy under every
evaluated attack. Averaged across all five conditions, it improves
mean accuracy by $0.96$ points over FedMD and improves worst-client
accuracy by $1.79$ points over FedMD, which is the stronger baseline
according to the corresponding aggregate metrics.

\begin{table}[tbh]
\caption{CIFAR-100 accuracy ($\%$) with heterogeneous models and Dirichlet
non-IID data.\label{tab:exp-main-cifar100}}

\centering{}%
\begin{tabular}{|c|c|c|c|c|c|c|}
\hline 
\multirow{2}{*}{Attack} & \multicolumn{2}{c|}{FedMD} & \multicolumn{2}{c|}{FedDF} & \multicolumn{2}{c|}{Ours}\tabularnewline
\cline{2-7}
 & Mean & Worst & Mean & Worst & Mean & Worst\tabularnewline
\hline 
None & 25.13 & 20.56 & 25.24 & 20.84 & 26.01 & 22.87\tabularnewline
\hline 
Gaussian & 24.47 & 21.07 & 24.09 & 18.20 & 25.92 & 22.98\tabularnewline
\hline 
Sign-flip & 23.73 & 20.21 & 24.08 & 19.48 & 25.46 & 22.79\tabularnewline
\hline 
Targeted & 25.46 & 21.44 & 24.87 & 18.35 & 25.87 & 22.74\tabularnewline
\hline 
Bias & 25.45 & 21.02 & 25.24 & 20.62 & 25.80 & 21.85\tabularnewline
\hline 
Average & 24.85 & 20.86 & 24.70 & 19.50 & 25.81 & 22.65\tabularnewline
\hline 
\end{tabular}
\end{table}

\begin{figure}[tbh]
\begin{centering}
\includegraphics[scale=0.45]{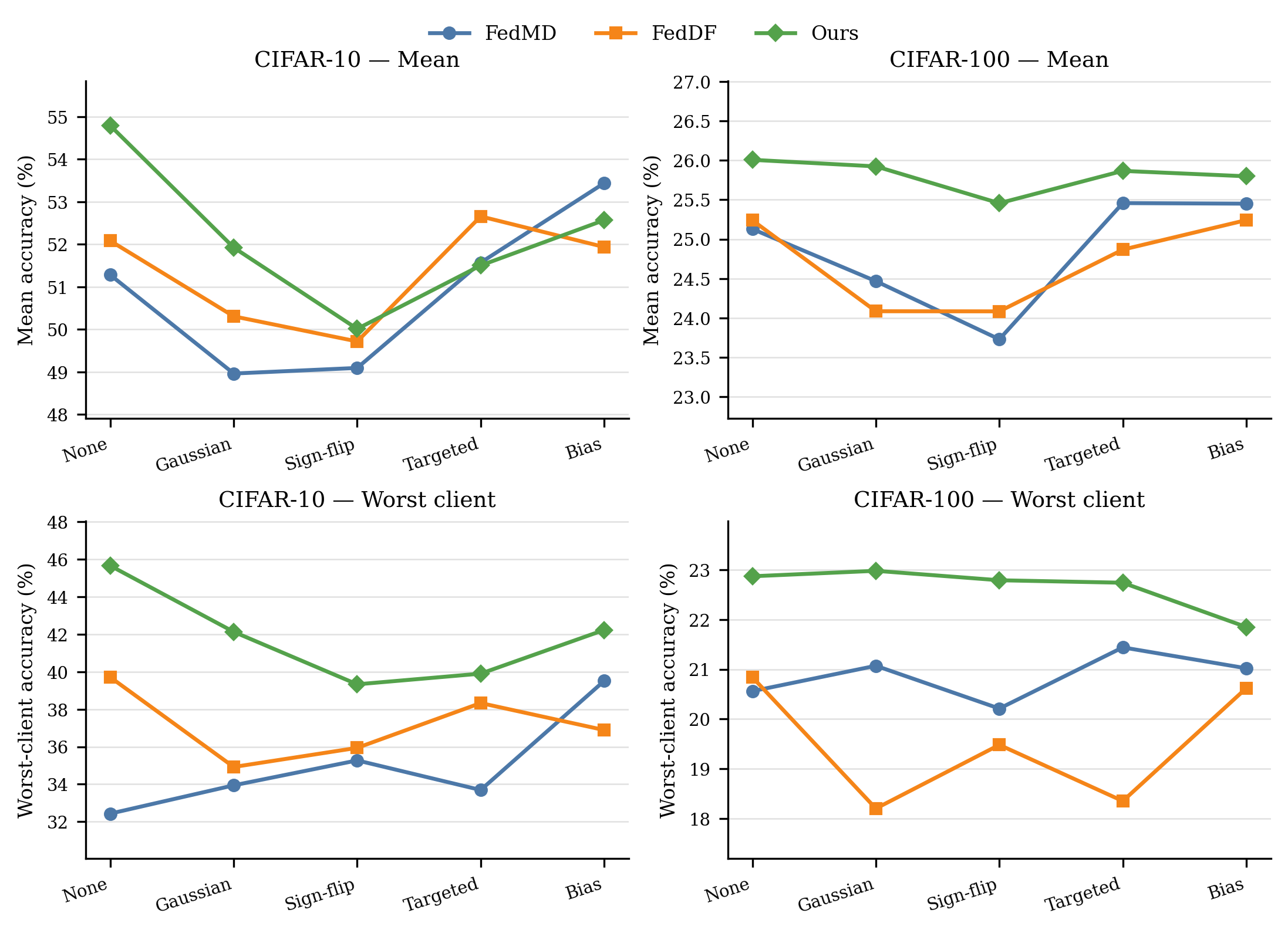}
\par\end{centering}
\caption{Comparison of FedMD, FedDF, and Ours under the benign setting and
four Byzantine attacks. The top and bottom rows show mean and worst-client
accuracy, respectively. Each panel uses an independently scaled vertical
axis to expose relative differences; exact values are reported in
Tables$~\ref{tab:exp-main-cifar10}$ and $\ref{tab:exp-main-cifar100}$.\label{fig:exp-main-comparison}}
\end{figure}

Figure $\ref{fig:exp-main-comparison}$ visualizes the main comparison.
The separation between Ours and the two baselines is more consistent
for worst-client accuracy than for mean accuracy, especially on CIFAR-10.

\begin{table*}[tbh]
\caption{Per-client Ours accuracy ($\%$) on CIFAR-10. C0 and C4 use ResNet-18;
C1 and C5 use MobileNetV2; C2 and C6 use ShuffleNetV2; C3 uses VGG-11.\label{tab:exp-client-cifar10}}

\centering{}%
\begin{tabular}{|c|c|c|c|c|c|c|c|c|c|}
\hline 
Attack & C0 & C1 & C2 & C3 & C4 & C5 & C6 & Mean & Worst\tabularnewline
\hline 
\hline 
None & 83.45 & 46.94 & 49.51 & 50.62 & 54.46 & 45.66 & 52.89 & 54.79 & 45.66\tabularnewline
\hline 
Gaussian & 81.15 & 43.85 & 47.86 & 47.15 & 49.32 & 42.13 & 52.00 & 51.92 & 42.13\tabularnewline
\hline 
Sign-flip & 81.36 & 39.33 & 42.15 & 44.87 & 49.19 & 40.02 & 53.16 & 50.01 & 39.33\tabularnewline
\hline 
Targeted & 82.34 & 43.72 & 49.13 & 48.02 & 48.32 & 39.90 & 49.08 & 51.50 & 39.90\tabularnewline
\hline 
Bias & 80.53 & 42.23 & 49.65 & 49.75 & 50.69 & 42.68 & 52.44 & 52.57 & 42.23\tabularnewline
\hline 
\end{tabular}
\end{table*}

\begin{table*}[tbh]
\caption{Per-client Ours accuracy ($\%$) on CIFAR-100.\label{tab:exp-client-cifar100}}

\centering{}%
\begin{tabular}{|c|c|c|c|c|c|c|c|c|c|}
\hline 
Attack & C0 & C1 & C2 & C3 & C4 & C5 & C6 & Mean & Worst\tabularnewline
\hline 
\hline 
None & 26.31 & 22.87 & 26.32 & 27.08 & 27.19 & 27.06 & 25.21 & 26.01 & 22.87\tabularnewline
\hline 
Gaussian & 26.73 & 22.98 & 25.97 & 26.94 & 27.44 & 26.07 & 25.33 & 25.92 & 22.98\tabularnewline
\hline 
Sign-flip & 26.23 & 22.79 & 25.66 & 26.15 & 26.51 & 25.98 & 24.87 & 25.46 & 22.79\tabularnewline
\hline 
Targeted & 26.42 & 22.74 & 25.58 & 27.21 & 27.28 & 27.26 & 24.57 & 25.87 & 22.74\tabularnewline
\hline 
Bias & 25.90 & 21.85 & 26.24 & 27.17 & 27.49 & 27.03 & 24.90 & 25.80 & 21.85\tabularnewline
\hline 
\end{tabular}
\end{table*}

\subsection{Robustness across Attack Types}

On CIFAR-10, the no-attack mean and worst-client accuracies of Ours
are $54.79\%$ and $45.66\%$. Gaussian, Sign-flip, Targeted, and
Bias reduce mean accuracy by $2.87$, $4.78$, $3.29$, and $2.22$
points, respectively. The corresponding worst-client reductions are
$3.53$, $6.33$, $5.76$, and $3.43$ points. Sign-flip is therefore
the strongest tested attack against Ours on CIFAR-10.

The CIFAR-100 results are considerably more stable across attacks.
Relative to the no-attack result, the mean-accuracy changes under
Gaussian, Sign-flip, Targeted, and Bias are $-0.08$, $-0.55$, $-0.14$,
and $-0.21$ points. The worst-client changes are $+0.11$, $-0.08$,
$-0.13$, and $-1.02$ points, respectively. The small positive Gaussian
difference is within normal single-run variation and should not be
interpreted as an improvement caused by the attack.

The validation-selected rounds provide an additional indication of
training stability. On CIFAR-10, the selected rounds are 120, 90,
85, 90, and 120 for None, Gaussian, Sign-flip, Targeted, and Bias,
respectively. In contrast, all CIFAR-100 runs select round 120. The
earlier CIFAR-10 checkpoints under the three more disruptive attacks
suggest that late-stage public supervision can accumulate attack-dependent
noise, even though validation-based model selection limits its effect
on the reported test result.

\subsection{Client-Level Performance}

Tables $\ref{tab:exp-client-cifar10}$ and $\ref{tab:exp-client-cifar100}$
report each honest client's accuracy. These results expose behavior
that is hidden by a single aggregate mean.

\begin{figure}[tbh]
\begin{centering}
\includegraphics[scale=0.45]{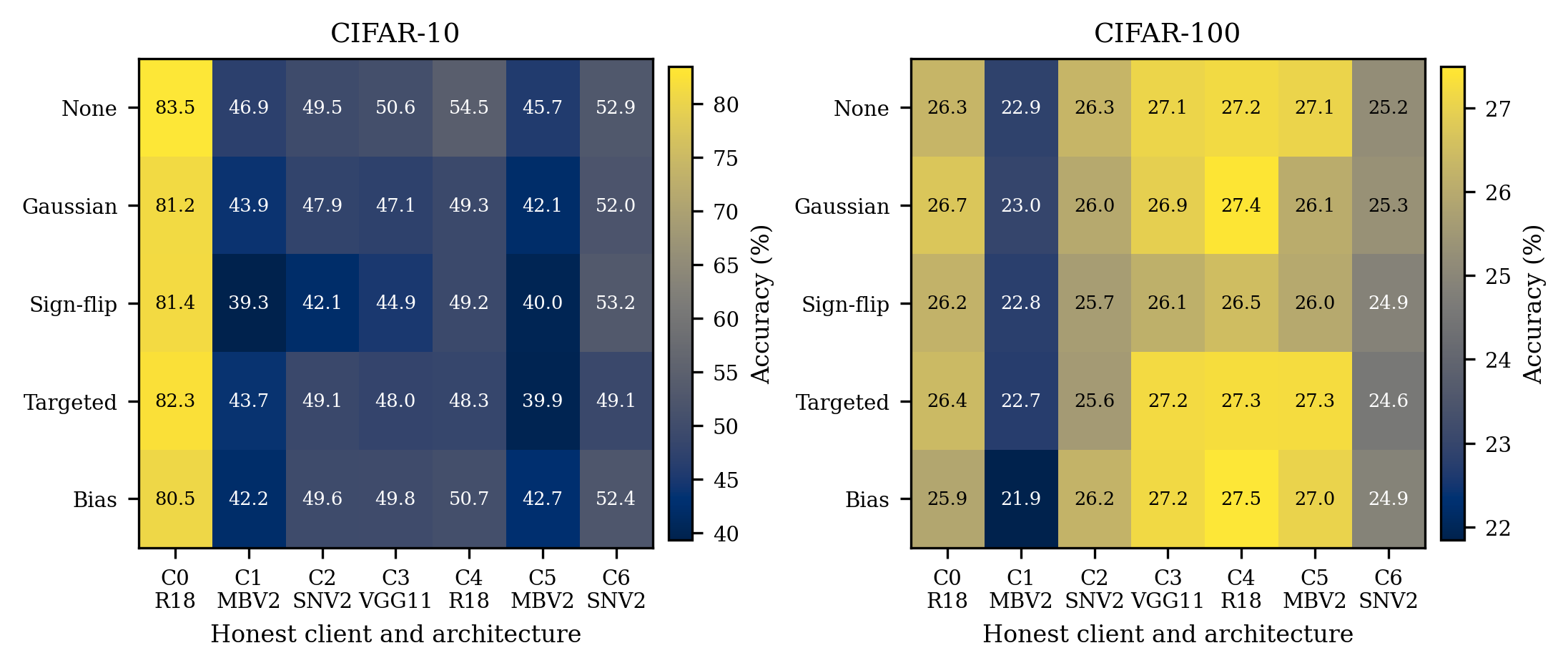}
\par\end{centering}
\caption{Per-client Ours accuracy under different attacks. Model abbreviations
are R18 (ResNet-18), MBV2 (MobileNetV2), SNV2 (ShuffleNetV2), and
VGG11 (VGG-11). Each dataset uses its own color scale.\label{fig:exp-client-heatmaps}}
\end{figure}

\begin{table}[tbh]
\caption{Variant comparison in mean/worst honest-client accuracy ($\%$). ``Attack
avg.'' averages the four Byzantine attacks.\label{tab:exp-variant-analysis}}

\centering{}%
\begin{tabular}{|c|c|c|c|c|c|}
\hline 
\multirow{2}{*}{Dataset} & \multirow{2}{*}{Method} & \multicolumn{2}{c|}{None} & \multicolumn{2}{c|}{Attack avg.}\tabularnewline
\cline{3-6}
 &  & Mean & Worst & Mean & Worst\tabularnewline
\hline 
\multirow{4}{*}{CIFAR-10} & Ours-Base & 50.08 & 37.48 & 49.97 & 36.73\tabularnewline
\cline{2-6}
 & Ours-MM & 49.95 & 34.69 & 50.95 & 37.22\tabularnewline
\cline{2-6}
 & Ours-MM+CA & 50.61 & 35.66 & 50.88 & 35.56\tabularnewline
\cline{2-6}
 & Ours & \textbf{54.79} & \textbf{45.66} & \textbf{51.50} & \textbf{40.90}\tabularnewline
\hline 
\hline 
\multirow{4}{*}{CIFAR-100} & Ours-Base & 24.46 & 21.28 & 24.39 & \multicolumn{1}{c}{20.82}\tabularnewline
\cline{2-6}
 & Ours-MM & 24.79 & 21.06 & 24.59 & 20.52\tabularnewline
\cline{2-6}
 & Ours-MM+CA & 24.96 & 21.68 & 24.80 & 20.39\tabularnewline
\cline{2-6}
 & Ours & \textbf{26.01} & \textbf{22.87} & \textbf{25.76} & \textbf{22.59}\tabularnewline
\hline 
\end{tabular}
\end{table}

\begin{table*}[tbh]
\caption{Final-round Ours diagnostics without attack.\label{tab:exp-mechanism-diagnostics}}

\centering{}%
\begin{tabular}{|c|c|c|c|c|c|c|c|}
\hline 
Dataset & Pseudo cov. & Conf. & $\mathrm{pred}$ & $\mathrm{bd}$ & $\mathrm{rel}$ & $Proj.pred.$ & $Proj.bd.$\tabularnewline
\hline 
\hline 
CIFAR-10 & $67.49\%$ & 0.739 & 0.745 & 0.209 & 0.046 & $35.71\%$ & $42.86\%$\tabularnewline
\hline 
CIFAR-100 & $22.72\%$ & 0.693 & 0.734 & 0.219 & 0.047 & $78.57\%$ & $50.00\%$\tabularnewline
\hline 
\end{tabular}
\end{table*}

CIFAR-10 exhibits substantial client heterogeneity. Across the five
conditions, C0 averages $81.77\%$, whereas C1 and C5 average $43.21\%$
and $42.08\%$. Because C0 and C4 both use ResNet-18 but obtain markedly
different accuracies, architecture alone cannot explain the gap; the
Dirichlet private partitions and client-specific optimization trajectories
are also major factors. On CIFAR-100, client averages across attacks
lie between $22.65\%$ and $27.18\%$, producing a substantially narrower
client-performance range.

The same pattern is visible in Figure $\ref{fig:exp-client-heatmaps}$:
CIFAR-10 performance is dominated by both client identity and private-data
heterogeneity, whereas CIFAR-100 is more uniform across honest clients.

\subsection{Variant Analysis\label{subsec:variant-analysis}}

Table $\ref{tab:exp-variant-analysis}$ compares four versions under
identical heterogeneous settings. ``Attack average'' denotes the macro-average
over Gaussian, Sign-flip, Targeted, and Bias, excluding None. This
is a variant comparison rather than a one-factor-at-a-time ablation
because successive versions modify more than one mechanism.

Ours-Base denotes the original single-modality robust distillation
method. Ours-MM introduces multi-modality knowledge transfer. Ours-MM+CA
further incorporates class-aware boundary modeling and modality-wise
gradient validation. Ours denotes the complete method with confidence-aware
dual targets and conflict-preserving gradient projection.

\begin{figure}[tbh]
\begin{centering}
\includegraphics[scale=0.45]{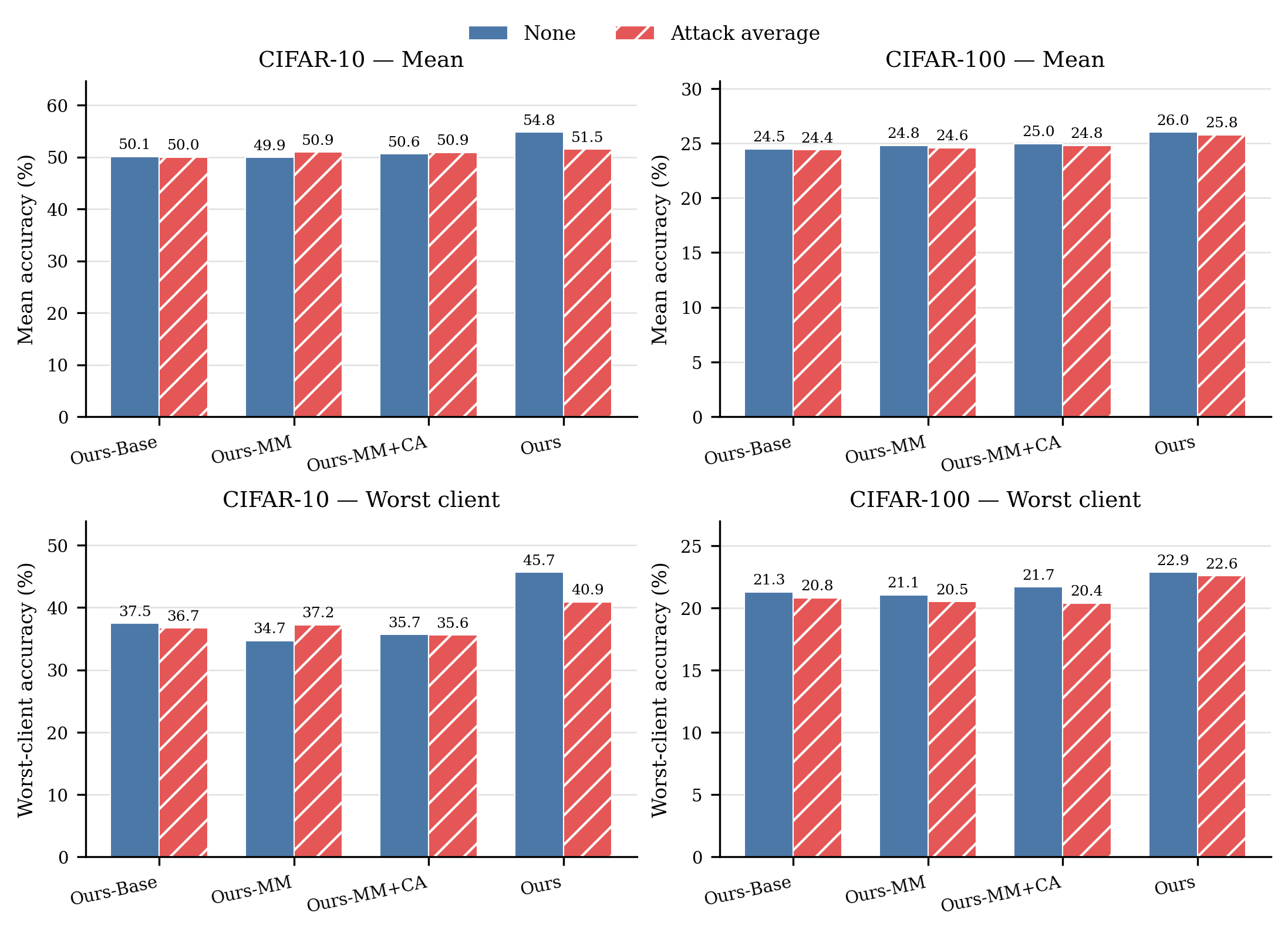}
\par\end{centering}
\caption{Comparison of Ours-Base and the three multi-modality variants. ``Attack
average'' is the macro-average over Gaussian, Sign-flip, Targeted,
and Bias. All bar charts start at zero.\label{fig:exp-variant-comparison}}
\end{figure}

On CIFAR-10 without attack, Ours improves the mean and worst-client
accuracies over Ours-MM+CA by $4.18$ and $10.00$ percentage points,
respectively. Averaged over the four Byzantine attacks, the corresponding
gains are $0.62$ and $5.34$ percentage points. On CIFAR-100, Ours
surpasses Ours-MM+CA by $0.96$ and $2.20$ percentage points in attack-averaged
mean and worst-client accuracy, respectively. These results show that
the complete design primarily improves robustness for the most vulnerable
honest client, consistent with its confidence-aware dual-target supervision
and prediction-preserving gradient projection. Because Ours and Ours-MM+CA
are cumulative designs rather than one-factor variants, we complement
this comparison with dedicated ablations of gradient validation and
the three knowledge modalities. Figure $\ref{fig:exp-variant-comparison}$
further highlights that the most pronounced advantage of Ours lies
in worst-client accuracy rather than in attack-averaged mean performance.

\subsection{Mechanism Diagnostics}

Table $\ref{tab:exp-mechanism-diagnostics}$ summarizes final-round
diagnostics for Ours in the no-attack runs. Pseudo-label coverage
is the fraction of public examples whose weighted vote confidence
exceeds $\tau_{conf}=0.5$. Projection frequency is averaged over
honest clients and the two public updates in the final round.

The adaptive modality weights remain close to the configured prediction-dominant
prior, while retaining nonzero boundary and correlation contributions.
CIFAR-100 has substantially lower pseudo-label coverage than CIFAR-10,
which is expected in the more difficult 100-class label space. The
high CIFAR-100 prediction-projection frequency also shows that private
gradient validation is active rather than merely adding an unused
safeguard.

\section{Conclusions\label{sec:Conclusions}}

In this paper, we proposed a robust decentralized federated distillation
method for heterogeneous models under Byzantine attacks. The key motivation
is that clients with different model architectures cannot directly
exchange or compare model parameters, while Byzantine clients may
manipulate the predictions sent to different honest clients. Therefore,
instead of performing collaboration in the parameter space, the proposed
method exchanges predictions on shared unlabeled public data. Specifically,
each client first evaluates the received predictions in three modalities
of class prediction, boundary decision, and prediction correlation.
Unreliable clients are then filtered, and the retained clients are
weighted to construct three teachers in the three modalities for distillation.
Finally, a supervised gradient computed from private data is used
to validate the three distillation gradients. Conflicting prediction
and boundary gradients are removed, and conflicting relation gradients
are suppressed before the final model update.

Theoretically, we showed that our algorithm achieves a bounded Byzantine
influence on both distillation gradients of all modalities and final
client private gradients after cross-modality fusion, thereby ensuring
stable local optimization for honest clients under Byzantine distillation.
Empirically, experiments on CIFAR-10 and CIFAR-100 for clients with
heterogeneous model architectures, non-IID data, and multiple Byzantine
attacks demonstrated that the proposed method provides robust collaboration
among  clients and achieves improved local model prediction accuracy,
 showing that our method can reduce the impact of different malicious
predictions received by different clients. Across all evaluated attacks,
the method achieves the highest worst client-accuracy on both datasets,
demonstrating its application value for decentralized federated learning
in unreliable real-world scenarios where clients are exposed to receiver-specific
Byzantine messages of malicious predictions.

\section*{Acknowledgment}

This work is supported by the Science and Technology Development Fund
of Macao (FDCT) (Project $\#$0015/2023/RIA1), Queensland State Department
of Environment and Science under the Quantum Challenges 2032 Program
(Project \#Q2032001). The corresponding author is Hong Shen.

\bibliographystyle{IEEEtran}
\bibliography{reference}

\end{document}